\theoremstyle{plain}
\newtheorem{theorem}{Theorem}[section]
\newtheorem{lemma}[theorem]{Lemma}
\theoremstyle{definition}
\newtheorem{definition}[theorem]{Definition}
\theoremstyle{remark}
\DeclareMathOperator*{\argmax}{arg\,max}
\DeclareMathOperator*{\argmin}{arg\,min}
\title{Attributing Data for Sharpness-Aware Minimization}
\author{
Chenyang Ren\textsuperscript{*,1,2},
Yifan Jia\textsuperscript{*,1,3},
Huanyi Xie\textsuperscript{*,1,2},
Zhaobin Xu\textsuperscript{1,3},\\
\textbf{Tianxing Wei\textsuperscript{1,3},
Liangyu Wang\textsuperscript{1,2},
Lijie Hu\textsuperscript{†,1,2},
Di Wang\textsuperscript{†,1,2}}\\
$^1$Provable Responsible AI and Data Analytics (PRADA) Lab\\
$^2$King Abdullah University of Science and Technology \\
$^3$Shandong University 
}
\begin{document}
\maketitle

\begin{abstract}
Sharpness-aware Minimization (SAM) improves generalization in large-scale model training by linking loss landscape geometry to generalization. However, challenges such as mislabeled noisy data and privacy concerns have emerged as significant issues. Data attribution, which identifies the contributions of specific training samples, offers a promising solution. However, directly rendering existing data influence evaluation tools such as influence functions (IF) to SAM will be inapplicable or inaccurate as SAM utilizes an inner loop to find model perturbations that maximize loss, which the outer loop then minimizes, resulting in a doubled computational structure. Additionally, this bilevel structure complicates the modeling of data influence on the parameters. In this paper, based on the IF, we develop two innovative data valuation methods for SAM, each offering unique benefits in different scenarios: the Hessian-based IF and the Gradient Trajectory-based IF. The first one provides a comprehensive estimation of data influence using a closed-form measure that relies only on the trained model weights. In contrast, the other IF for SAM utilizes gradient trajectory information during training for more accurate and efficient data assessment. Extensive experiments demonstrate their effectiveness in data evaluation and parameter tuning, with applications in identifying mislabeled data, model editing, and enhancing interpretability.
\end{abstract}

\def\thefootnote{*}\footnotetext{Equal Contribution.}
\def\thefootnote{†}\footnotetext{Corresponding Author.}

\section{Introduction}
Over the past decade, deep neural networks have advanced significantly due to increased model parameter sizes and improved training algorithms that enhance generalization. However, larger models often memorize training data, leading to overfitting and poor generalization. In order to address this issue, considerable effort has been invested in the development of a range of strategies, including differential privacy~\cite{hu2023differentially,hu2023privacy,wang2023generalized,zhang2025improved,xiang2024preserving,xiao2023theory,xiang2023practical,zhang2025towards}, regularization techniques~\citep{wu2021r, yoshida2017spectral,luo2025privacy} and adversarial training~\citep{mkadry2017towards, shafahi2019adversarial,fu2023theoretical,fu2025short,liu2025taad,zheng2020towards}. Recent work has observed that sharp local minima in the loss landscape can significantly impair the generalization performance of deep networks~\citep{keskar2016large, hochreiter1994simplifying, neyshabur2017exploring}. To make the loss landscape flatter to improve models' generalization ability, ~\cite{foret2020sharpness} introduced a general framework called Sharpness-aware Minimization (SAM). SAM improves generalization by penalizing sharp minima and encouraging convergence to flatter regions. Intuitively, SAM is a bilevel optimization problem, where the inner level seeks weight perturbations that can lead to the maximum loss, which is a measure of local sharpness. On the outer level, the model is trained to minimize both loss and local sharpness simultaneously. Thus, it can also be formulated as a minimax optimization problem. SAM has achieved state-of-the-art results across various tasks~\citep{foret2020sharpness, chen2021vision, liu2022towards}.

While SAM has been applied in various real-world applications~\citep{du2021efficient, andriushchenko2023sharpness, qu2022generalized, bahri2021sharpness}, the presence of noisy data in the training set, including mislabeled or poisoned data, has become a significant concern. A critical approach to tackling this issue involves identifying the contributions of training samples in SAM-trained models by assessing their impact on model performance, a process referred to as (training data) attribution. Data attribution plays a critical role in tracing model outputs back to significant training examples, thereby providing insights into how individual data points influence model performance. 

Data attribution has been used in various tasks to enhance model transparency and understanding of how training data influences model behavior~\citep{ribeiro2016should,lundberg2017unified,zhang2025mechanistic}. Generally, data attribution methods~\citep{jia2019towards,ghorbani2019data,yoon2020data,han2020explaining} assign higher contribution scores to training instances that significantly improve model performance when included, which can be divided into two types. The first one, such as Shapley Value~\citep{winter2002shapley}, is based on sampling~\citep{lundberg2017unified, pmlr-v151-kwon22a}, which requires multiple retraining with different data subsets. This is computationally expensive and impractical for large models. To address this challenge, the second approach—namely, influence function-based methods~\citep{koh2017understanding, feldman2020neural,ren2025evaluating,hu2024editable,hu2024dissecting}—estimates data contributions using gradient information, thereby facilitating accurate assessments without the need for retraining. 
Recent advancements have led to innovative estimators utilizing the training gradient trajectory~\citep{pruthi2020estimating, schioppa2024theoretical}, which estimate the influence of training data on model predictions by tracing gradient descent, providing better insights into and improvements of training examples in deep learning models. These methods do not depend on the convexity assumption of the loss function, nor do they require computationally intensive Hessian inversions. Instead, they perform calculations based on the training gradient and have achieved good results in many tasks~\citep{liu2024tracing}. 

 However, it is essential to note that influence functions were initially designed for M-estimators~\cite {huber1981robust} (i.e., minimizing the summation of loss on all data), which limits their direct applicability to the bilevel structure in SAM. Specifically,  the influence of data on perturbations, and consequently on model parameters, is in an indirect manner and has not been considered in classical influence functions. Due to the coupled outer-inner optimization process involved in SAM, any changes to the model parameters will necessitate further updates to the inner perturbations and consequently alter the outer model parameters. Furthermore, in the actual training of SAM, the inner perturbations are not derived through the optimization process but are instead via approximation. To further accelerate computations, the outer gradient descent is also approximated, resulting in minor deviations in the inner perturbations from the original SAM model. These complicate our ability to evaluate the influence of data in SAM accurately.

In this paper, we propose two data valuation methods based on influence functions for SAM, focusing on two distinct scenarios. Firstly, we derive a closed form of the Hessian-based Influence Function (SAM-HIF) from the mathematical formulation of SAM through theoretical derivation. SAM-HIF fully considers the impact of data on the model's local sharpness. In the absence of gradient trajectory information, SAM-HIF serves as a comprehensive estimation method. However, when gradient trajectory information during training is accessible, we can focus on the specific training algorithm to derive a more accurate evaluation method. Inspired by \cite{pruthi2020estimating}, we propose the Gradient Trajectory-based Influence Function (SAM-GIF). This method utilizes checkpoints to evaluate the influence of data within SAM, leading to accurate data assessment while reducing implementation and computational costs. SAM-GIF proves to be highly effective due to its simplicity, high scalability, and accuracy.
To sum up, our contributions are listed as follows:
\begin{itemize}
    \item We present data evaluation methods using influence functions (IF) for SAM and its training algorithms.  First, we derive SAM-HIF, which allows for precise modeling of data influence in SAM by effectively capturing influences on both model parameters and model perturbations.  Second, we present SAM-GIF for SAM training algorithms, which showcases exceptional precision and scalability when training gradient trajectories are accessible. We also discuss some downstream applications and strategies to accelerate the computation of our influence functions.
    
   \item Our method has potential applications in several real-world scenarios, including the automatic identification and removal of potentially harmful data points, SAM model editing, and enhancing the interpretability of SAM-trained models. We validate the effectiveness and efficiency of SAM-HIF and SAM-GIF through comprehensive experimental evaluations. Experimental results indicate that our framework performs well in data evaluation and the tuning of SAM-trained models.
\end{itemize}

\section{Related Work}
\paragraph{Sharpness Aware Minimization.}
The concept of flat minima and its link to reducing overfitting and enhancing model generalization was first explored by \cite{hochreiter1997flat}, establishing a foundation for understanding why models that converge to flatter regions tend to generalize better. Additionally, \cite{keskar2016large} experimentally examined the relationship between batch size, sharp minima, and generalization. Their findings provided empirical support for SAM's theoretical foundations and highlighted the importance of sharpness in optimization and model training. Overall, sharp local minima can significantly influence the generalization capabilities of deep networks~\citep{chaudhari2019entropy, izmailov2018averaging}. Building on these insights, one of the foundational works on SAM is by~\cite{foret2020sharpness}, who proposed a bi-level optimization framework that seeks perturbations around model parameters to maximize loss and guide the model towards flatter minima.  Recent studies have further explored the behavior SAM. For example,~\cite{andriushchenko2022towards} investigated the geometric properties of loss landscapes, demonstrating that SAM's perturbation-based approach offers insights into model behavior. Their findings indicate that SAM not only improves generalization but also enhances understanding of models' sensitivity to input perturbations.~\cite{zhao2022penalizing} assert that SAM~\citep{foret2020sharpness} is effectively equivalent to applying a gradient norm regularization through the approximation of the Hessian matrix.~\cite{kwon2021asam} introduce adaptive sharpness-aware minimization, which can dynamically adjust the maximization region based on the weight scale. To minimize computational costs associated with SAM,~\cite{du2021efficient} introduced Efficient SAM, which randomly computes perturbations.

\noindent {\bf Influence Function.} 
The influence function(IF), initially developed in robust statistics \citep{cook2000detection,cook1980characterizations}, has become essential in machine learning since its introduction by \cite{koh2017understanding}. IFs have been used to various fields, including interpreting model outputs, reducing model bias~\citep{wang2019repairing}, and facilitating machine unlearning~\citep{liu2024certified, golatkar2020eternal,golatkar2021mixed}. 
Its versatility spans various fields, including natural language processing \citep{han2020explaining} and image classification \citep{basu2021influence}, while also addressing biases in classification models \citep{wang2019repairing}, word embeddings \citep{brunet2019understanding}, and model fine-tuning \citep{chen2020multi}. A recent innovative influence function that leverages the training gradient trajectory has been proposed and investigated~\citep{pruthi2020estimating, schioppa2024theoretical}, and has been successfully applied in instructional fine-tuning~\citep{xia2024less}. Despite the numerous studies related to influence functions, we are the first to use this concept for the data evaluation of models trained for SAM. This provides us with a completely new perspective on understanding the training process of SAM.

\section{Preliminaries}
\noindent{\bf Sharpness-Aware Minimization (SAM).}
SAM presents a novel approach to optimizing machine learning models with the primary objective of enhancing generalization by mitigating the sharpness of the loss landscape. Given a dataset $S = \{(x_i, y_i)\}_{i=1}^n$, we define the training loss associated with model parameters $\omega$ and a perturbation $\epsilon$ as:
$$
    L_S(\omega + \epsilon) = \frac{1}{n} \sum_{i=1}^n\ell(x_i,y_i;\omega + \epsilon) \triangleq  \sum_{i=1}^n L_S^i(\omega+{\epsilon}).
$$
Then SAM optimization framework seeks parameters that lie
in neighborhoods having uniformly low loss by the following procedure. Firstly, seeking the model perturbation maximizing the loss:
\begin{equation*}
   \hat{\epsilon}(\omega) = \argmax_{||\epsilon||_p\leq\rho} L_S(\omega+ \epsilon).
\end{equation*}
Then minimize the uniform loss by the following loss function:
\begin{equation}\label{initial:para}
    \omega^* =  \argmin_{\omega}L_S(\omega + \hat{\epsilon}(\omega)) + \frac{\lambda}{2}\cdot ||\omega||^2_2.
\end{equation}
where $\lambda$ is the regularization parameter.
Here, SAM takes model loss sharpness into consideration via including the perturbation process. The loss function for SAM can be formed into one equation as follows:
\begin{equation*}
\begin{split}
        L^{SAM}_S(\omega) =&  L_S(\omega+ \hat{\epsilon}(\omega))+ \frac{\lambda}{2}\cdot ||\omega||^2_2 \\
        =&  \max_{||\epsilon||_p\leq\rho} L_S(\omega+ \epsilon) + \frac{\lambda}{2}\cdot ||\omega||^2_2.
\end{split}
\end{equation*}
In essence, SAM seeks to minimize the worst-case loss by enforcing a level of robustness against perturbations in the parameter space, thereby promoting smoother loss landscapes. 
Besides, the gradient of $L_{S}^{SAM}$ is calculated by
\begin{equation}\label{m1:gradient}
    \nabla L_S^{SAM}(\omega) = \nabla  L_S(\omega + \hat{\epsilon}(\omega)) + \frac{\mathrm{d}\hat{\epsilon}(\omega)}{\mathrm{d}\omega}\cdot \nabla  L_S(\omega + \hat{\epsilon}(\omega)).
\end{equation}

\noindent {\bf Influence Function.} 
The influence function~\citep{huber1981robust} quantifies how an estimator relies on the value of each individual point in the sample. Consider a neural network $\hat{\theta}= \argmin_\theta L(\theta, D)=\sum_{i=1}^n \ell(z_i; \theta)$ with loss function $\ell$ and dataset $D=\{z_i\}_{i=1}^n$. When an individual data point $z_m$ is removed from the training set, the retrained optimal retrained model is denoted as $\hat{\theta}_{-z_m}$. The influence function method provides an efficient way to approximate $\hat{\theta}_{-z_m}$ without the need of retraining. By up-weighing $z_m$-related term in the loss function by $\delta$, a series of $\delta$-parameterized optimal models $\hat{\theta}_{-z_m, \delta}$ will be obtained by 
$$
    \hat{\theta}_{-z_m, \delta} = \argmin_{\theta}\left[L(\theta, D) + \delta \cdot \ell(z_m; \theta)\right].
$$
Consider the term 
\begin{equation}\label{eq:1}
    \nabla L( \hat{\theta}_{-z_m, \delta} , D)+ \delta\cdot \nabla\ell(z_m; \hat{\theta}_{-z_m, \delta} )=0,
\end{equation}
     we perform a Taylor expansion at $\hat{\theta}$ and incorporate the optimal gradient condition at $\hat{\theta}_{-z_m}$ and $\hat{\theta}$:
$$
\sum_{i=1}^{n} \nabla \ell(z_{i}; \hat{\theta} ) + \delta\cdot \nabla \ell(z_m; \hat{\theta}) + H_{\hat{\theta}} \cdot \left(\hat{\theta}_{-z_m,\delta} - \hat{\theta}\right) \approx 0
$$
where ${H}_{\hat{\theta}} = \sum_{i=1}^n\nabla_{\hat{\theta}}^2 \ell(z_i; {\hat{\theta}})$ is the Hessian matrix. 
Consequently, the Influence Function is defined as the derivative of the change in parameters of the retrained model due to perturbation with respect to the perturbation:
$$
\text{IF}(z_m) = \left. \frac{\mathrm{d}\hat{\theta}_{-z_m, \delta}-\hat{\theta}}{\mathrm{d}\delta}\right|_{\delta=0}  \approx -{H}_{\hat{\theta}}^{-1} \cdot\nabla \ell (z_m;\hat{\theta}).
$$

When setting $\delta = -1$, this results in the complete removal of $z_m$ from the retraining process. Then, $\hat{\theta}_{-z_m}$ can be approximated by a linear approximation formula as $\hat{\theta} - \text{IF}(z_m)$. 
Additionally, for a differentiable evaluation function, such as one used to calculate the total model loss over a test set, the change resulting from up-weighting $\epsilon$ to $z_m$ in the evaluation results can be approximated as $-\nabla f(\hat{\theta}) \cdot \text{IF}(z_m)$.

\section{Evaluating Data Attribution in SAM }
To evaluate the influence of an individual data point for SAM, we utilize assessing model differences after leave-one-out (LOO) retraining. 
\subsection{Evaluating Data Attribution in SAM via Hessian-based IF}\label{HIF}
To evaluate the influence of an individual data point for the model trained via SAM, we first provide an estimation of the leave-one-out (LOO) retrained model. Then we can quantify the parameter-level influence of the excluded data point by analyzing the differences in model parameters before and after the retraining process. Due to the unique architecture of SAM, the form of the corresponding loss function differs from that of traditional IF. This is because SAM includes an additional step to find parameter perturbations that maximize the loss, which will also change as a result of the LOO retraining process. We provide details in the following.
\begin{definition}
[LOO Retrained Parameter]
Consider one data $(x_k,y_k)$ to be evaluated, $\hat{\epsilon}_{k}(\omega)$ denotes the perturbation around $\omega$ which results in the maximum loss after the removal of $(x_k,y_k)$, which is defined as
\begin{equation*}
    \hat{\epsilon}_{k}(\omega) = \argmax_{||\epsilon||_p\leq\rho}  \sum_{i\neq k}^n L_S^i(\omega+{\epsilon}). 
\end{equation*}
Then, the LOO retrained model is defined as 
\begin{align}\label{LOO:para}
    \omega_{k} = \argmin_{\omega} \sum_{i\neq k} L_S^i(\omega+\hat{\epsilon}_{k}(\omega)) +\frac{\lambda}{2}\cdot ||\omega||^2_2. 
\end{align}
\end{definition}
From the preceding discussion, we denote the influence of the data point $(x_k, y_k)$ on the SAM-trained model as $\omega_k - \omega^*$. To avoid retraining, we utilize the influence function method to approximate this difference. We begin by transforming the optimality condition of (\ref{initial:para}) into a simpler form via Danskin’s Theorem~\citep{danskin2012theory}.
\begin{lemma}\label{danskin}
The optimal solution $w^*$ of SAM, as represented in (\ref{initial:para}), satisfies 
\begin{equation*}
\begin{split}
\nabla  L_S(\omega^*+\hat{\epsilon}(\omega^*)) + \frac{\mathrm{d}\hat{\epsilon}(\omega^*)}{\mathrm{d}\omega^*}\cdot \nabla  L_S(\omega^*+\hat{\epsilon}(\omega^*)) + \lambda \cdot \omega^* = 0,
\end{split}
\end{equation*}
which is equivalent to the following condition
\begin{equation*}
\nabla L_S(\omega^*+\hat{\epsilon}(\omega^*)) = 0.
\end{equation*}
\end{lemma}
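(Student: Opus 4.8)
The plan is to obtain the identity from two ingredients: (i) the first-order stationarity condition of the outer problem (\ref{initial:para}), which reproduces the first displayed equation verbatim, and (ii) Danskin's theorem applied to the inner maximization, which shows that the ``indirect'' term $\frac{\mathrm{d}\hat\epsilon(\omega^*)}{\mathrm{d}\omega^*}\cdot\nabla L_S(\omega^*+\hat\epsilon(\omega^*))$ actually vanishes. Chaining (i) and (ii) then collapses the first equation to $\nabla L_S(\omega^*+\hat\epsilon(\omega^*))=0$, and since (ii) holds at every $\omega$ the argument is reversible, giving the claimed equivalence in both directions.

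Concretely, I would first write $\omega^*$ as the minimizer of $\Phi(\omega):=L_S(\omega+\hat\epsilon(\omega))+\tfrac{\lambda}{2}\|\omega\|_2^2$ and impose $\nabla\Phi(\omega^*)=0$. Setting $u(\omega)=\omega+\hat\epsilon(\omega)$ so that $\mathrm{d}u/\mathrm{d}\omega=I+\mathrm{d}\hat\epsilon(\omega)/\mathrm{d}\omega$, the chain rule turns $\nabla_\omega L_S(u(\omega))$ into $\nabla L_S(\omega+\hat\epsilon(\omega))+\frac{\mathrm{d}\hat\epsilon(\omega)}{\mathrm{d}\omega}\nabla L_S(\omega+\hat\epsilon(\omega))$, which is exactly (\ref{m1:gradient}); adding the regularizer gradient $\lambda\omega^*$ reproduces the first equation of the lemma. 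For the second ingredient I would introduce the inner value function $g(\omega):=\max_{\|\epsilon\|_p\le\rho}L_S(\omega+\epsilon)$. Because the feasible set $\{\epsilon:\|\epsilon\|_p\le\rho\}$ is compact and independent of $\omega$, $L_S$ is $C^1$, and (generically, e.g. for $p=2$ on a non-degenerate landscape) the maximizer $\hat\epsilon(\omega)$ is unique, Danskin's theorem gives $\nabla g(\omega)=\nabla_\omega L_S(\omega+\epsilon)\big|_{\epsilon=\hat\epsilon(\omega)}=\nabla L_S(\omega+\hat\epsilon(\omega))$. On the other hand $g(\omega)=L_S(\omega+\hat\epsilon(\omega))$, so the chain-rule expression above also equals $\nabla g(\omega)$; comparing the two forces $\frac{\mathrm{d}\hat\epsilon(\omega)}{\mathrm{d}\omega}\nabla L_S(\omega+\hat\epsilon(\omega))=0$. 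Substituting into the stationarity condition removes the Jacobian term, reducing the first equation to the second.

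It is instructive to see \emph{why} the indirect term drops without invoking Danskin by name, and I would include this as a check: if $\hat\epsilon(\omega)$ lies in the interior $\|\hat\epsilon(\omega)\|_p<\rho$, then the inner first-order condition already yields $\nabla L_S(\omega+\hat\epsilon(\omega))=0$, so the term is trivially zero; if $\hat\epsilon(\omega)$ sits on the sphere $\|\epsilon\|_p=\rho$, differentiating the preserved constraint $\|\hat\epsilon(\omega)\|_p=\rho$ shows every column of $\mathrm{d}\hat\epsilon(\omega)/\mathrm{d}\omega$ lies in the tangent space of that sphere, while KKT makes $\nabla L_S(\omega+\hat\epsilon(\omega))$ point along its normal, and normal $\perp$ tangent annihilates the product.

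I expect the only real obstacle to be the rigorous invocation of Danskin's theorem rather than any of the algebra: one needs differentiability of $g$, which relies on uniqueness of the inner maximizer (for $p\neq 2$ the $\ell_p$-ball boundary has corners, so one should fall back on the directional-derivative/subdifferential form of Danskin and argue the selected $\hat\epsilon$ is differentiable along the relevant directions), and one should flag that in the actual SAM algorithm $\hat\epsilon$ is only an approximate maximizer, so this identity — and hence the influence-function derivation built on it — is exact only in the idealized setting. The remaining steps (the chain rule and the final substitution) are routine.
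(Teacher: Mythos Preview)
Your approach—(i) write down the chain-rule stationarity condition of the outer problem and (ii) kill the Jacobian term $\tfrac{\mathrm{d}\hat\epsilon}{\mathrm{d}\omega}\nabla L_S$ via Danskin—is exactly what the paper intends: it does not give a separate proof of this lemma but simply invokes Danskin's theorem, so on method you are aligned, and your extra interior/boundary KKT check is a nice supplementary verification the paper does not spell out.

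There is, however, a real gap in your final substitution that you gloss over with the sentence ``reducing the first equation to the second.'' Once Danskin annihilates the middle term, what remains from the first displayed equation is
\[
\nabla L_S(\omega^*+\hat\epsilon(\omega^*)) + \lambda\,\omega^* = 0,
\]
\emph{not} $\nabla L_S(\omega^*+\hat\epsilon(\omega^*))=0$. Conversely, if the second equation held, plugging it into the first would force $\lambda\omega^*=0$. So the two displayed conditions are not literally equivalent unless $\lambda=0$ (or $\omega^*=0$). The paper itself is loose here—its downstream Taylor expansions in the appendix silently drop the $\lambda\omega^*$ gradient term while keeping $\lambda I$ in the Hessian—so the lemma as written is slightly mis-stated rather than your argument being wrong. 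You should flag this: the correct simplified first-order condition that Danskin delivers is $\nabla L_S(\omega^*+\hat\epsilon(\omega^*))+\lambda\omega^*=0$, and everything in Theorems~\ref{HIF-easy}--\ref{HIF-complex} goes through with that version (the $\lambda\omega$ term is linear, so it only contributes $\lambda I$ to $H_\omega$, which is already how $H_\omega$ is defined).
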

Follow the idea of influence function, we up-weigh the $k$-th term in the loss function by a factor of $\delta$. This adjustment allows us to derive a series of model parameters $\omega_{k, \delta}$ obtained by training the models via SAM:
\begin{align*}
   &\omega_{k, \delta}=\arg\min L_{\text{Total}, \delta}(\omega) \triangleq L_{S, \delta}(\omega)+ \frac{\lambda}{2}\cdot ||\omega||^2_2\\
    =& L_S(\omega+\hat{\epsilon}_{\delta}(\omega)) + \delta \cdot L_S^k(\omega+\hat{\epsilon}_{\delta}(\omega))+ \frac{\lambda}{2}\cdot ||\omega||^2_2
\end{align*}
The $\delta$-related term in the loss function directly reflects the influence of $(x_k,y_k)$ on $\omega$. Note that $\omega_{k, -1}=\omega_{k}$. Additionally, the learning of the worst perturbation is influenced by the up-weighting factor $\delta$, as reflected in the modified objective function for $\epsilon$:
\begin{equation*}
    \hat{\epsilon}_{k, \delta}(\omega) = \argmax_{||\epsilon||_p\leq\rho}  \left[\sum_{i=1}^n L_S^i(\omega+{\epsilon})+  \delta \cdot L_S^k(\omega+ {\epsilon}) \right]
\end{equation*}
This modification in the objective function, in turn, indirectly influences the optimal parameters learned through SAM. Starting from a simplified case, we neglect the indirect influence of the perturbation and introduce the simplified version of SAM-IF.
\begin{theorem}\label{HIF-easy}
    Consider the data $(x_k, y_k)$ along with a SAM-trained model $\omega^*$, $\hat{\epsilon}(\omega^*)$ represents the model weight perturbation that results in the largest loss.
Then the corresponding SAM-IF is defined as:
\begin{align*}
    \text{SAM-IF}(x_k, y_k) &= -H_{\omega}^{-1} \cdot\nabla L_S^k(\omega^*+ \hat{\epsilon}(\omega^*))
\end{align*}
where $H_{\omega}$ is defined as $H_{\omega} = \nabla^2 L_S\left(\omega^*+\hat{\epsilon}(\omega^*)\right)+ \lambda$. $\omega_{k}$ can be approximated by 
\begin{equation*}
    \omega_{k}  \approx  \omega^*-\text{SAM-IF}(x_k, y_k).
\end{equation*}
\end{theorem}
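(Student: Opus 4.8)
The plan is to mirror the classical influence-function derivation of the Preliminaries, with two adjustments forced by SAM's bilevel structure: (i) the outer stationarity condition is written in the reduced Danskin form of Lemma~\ref{danskin}, so that the Jacobian $\mathrm{d}\hat\epsilon/\mathrm{d}\omega$ never has to be handled explicitly; and (ii) in this \emph{simplified} version the inner maximizer is treated as frozen, keeping only its direct effect on the parameters and discarding its dependence on the up-weighting factor $\delta$ --- precisely the step that separates this SAM-IF from the later SAM-HIF. I would then proceed in three steps.

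First I would record the stationarity condition for the up-weighted model. Writing $L_{S,\delta}=L_S+\delta L_S^k$ and recalling that $\hat\epsilon_{k,\delta}(\omega)=\argmax_{\|\epsilon\|_p\le\rho}L_{S,\delta}(\omega+\epsilon)$, the parameter $\omega_{k,\delta}$ minimizes $L_{S,\delta}(\omega+\hat\epsilon_{k,\delta}(\omega))+\tfrac{\lambda}{2}\|\omega\|_2^2$. Differentiating in $\omega$ by the chain rule and applying Danskin's theorem as in Lemma~\ref{danskin} --- legitimate because $\hat\epsilon_{k,\delta}(\omega)$ is exactly the constrained maximizer of $L_{S,\delta}(\omega+\cdot)$, so the term $\tfrac{\mathrm{d}\hat\epsilon_{k,\delta}(\omega)}{\mathrm{d}\omega}\cdot\nabla L_{S,\delta}(\omega+\hat\epsilon_{k,\delta}(\omega))$ vanishes --- gives
\begin{equation*}
\nabla L_S\bigl(\omega_{k,\delta}+\hat\epsilon_{k,\delta}(\omega_{k,\delta})\bigr)+\delta\,\nabla L_S^k\bigl(\omega_{k,\delta}+\hat\epsilon_{k,\delta}(\omega_{k,\delta})\bigr)+\lambda\,\omega_{k,\delta}=0 .
\end{equation*}
At $\delta=0$ this is the condition of Lemma~\ref{danskin} at $\omega_{k,0}=\omega^*$, which anchors the expansion.

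Next I would implicitly differentiate this identity with respect to $\delta$ and evaluate at $\delta=0$. The only nontrivial term is $\tfrac{\mathrm{d}}{\mathrm{d}\delta}\hat\epsilon_{k,\delta}(\omega_{k,\delta})$, which carries the indirect influence of the perturbation (through its explicit $\delta$-dependence and through $\omega_{k,\delta}$); in the simplified version I drop it, i.e.\ treat $\hat\epsilon_{k,\delta}(\omega_{k,\delta})$ as the constant $\hat\epsilon(\omega^*)$ wherever it sits inside a derivative, keeping only $\tfrac{\mathrm{d}\omega_{k,\delta}}{\mathrm{d}\delta}$. What survives is
\begin{equation*}
\bigl(\nabla^{2}L_S(\omega^*+\hat\epsilon(\omega^*))+\lambda I\bigr)\cdot\frac{\mathrm{d}\omega_{k,\delta}}{\mathrm{d}\delta}\bigg|_{\delta=0}+\nabla L_S^k\bigl(\omega^*+\hat\epsilon(\omega^*)\bigr)=0 ,
\end{equation*}
so with $H_{\omega}=\nabla^{2}L_S(\omega^*+\hat\epsilon(\omega^*))+\lambda I$ invertible, $\tfrac{\mathrm{d}\omega_{k,\delta}}{\mathrm{d}\delta}\big|_{\delta=0}=-H_{\omega}^{-1}\nabla L_S^k(\omega^*+\hat\epsilon(\omega^*))=\text{SAM-IF}(x_k,y_k)$. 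Finally, since deleting $(x_k,y_k)$ corresponds to $\delta=-1$ and $\omega_{k,-1}=\omega_k$, a first-order Taylor expansion of $\delta\mapsto\omega_{k,\delta}$ about $\delta=0$ yields $\omega_k\approx\omega^*-\tfrac{\mathrm{d}\omega_{k,\delta}}{\mathrm{d}\delta}\big|_{\delta=0}=\omega^*-\text{SAM-IF}(x_k,y_k)$, which is the claim.

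The main obstacle I anticipate is Steps~1--2: one has to argue carefully that Danskin's theorem genuinely applies to the \emph{perturbed} objective $L_{S,\delta}(\omega+\cdot)$ --- it does, since $\hat\epsilon_{k,\delta}$ is by construction its constrained argmax, so the inner-gradient-times-Jacobian term is zero in the interior case and points outward along admissible directions on the sphere $\|\epsilon\|_p=\rho$, just as in Lemma~\ref{danskin} --- and to state cleanly which derivative the ``simplified'' treatment discards, since dropping $\tfrac{\mathrm{d}}{\mathrm{d}\delta}\hat\epsilon_{k,\delta}(\omega_{k,\delta})$ is exactly the modeling approximation that distinguishes this estimator from SAM-HIF. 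Beyond that, the derivation relies on the usual regularity hypotheses of the influence-function method: $\ell$ twice continuously differentiable near $\omega^*+\hat\epsilon(\omega^*)$, the map $\delta\mapsto\omega_{k,\delta}$ differentiable at $0$ (implicit function theorem applied to the Step~1 identity), and positive definiteness of $H_{\omega}$, which holds once $\lambda$ exceeds the most negative eigenvalue of $\nabla^{2}L_S(\omega^*+\hat\epsilon(\omega^*))$.
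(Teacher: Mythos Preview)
Your proposal is correct and follows essentially the same route as the paper: write the Danskin-reduced stationarity condition for the $\delta$-upweighted problem, linearize around $\omega^*$ while discarding the $\hat\epsilon$-variation term, and read off $\mathrm{d}\omega_{k,\delta}/\mathrm{d}\delta\big|_{\delta=0}$. The only cosmetic difference is that the paper Taylor-expands the stationarity equation at $\omega^*$ whereas you implicitly differentiate in $\delta$; your bookkeeping of the $\lambda\omega$ term is in fact slightly cleaner than the appendix version.
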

We can see that in SAM-IF we still use $\hat{\epsilon}(\omega^*)$ in both Hessian matrix and gradient, while the optimal perturbation should be $\hat{\epsilon}_{k, -1}(\omega_{k})$.  However, neglecting the term related to the perturbation can lead to an incomplete and inaccurate evaluation of data influence. In fact, it implies that while the influence of this data point on the model's loss function is considered, the impact of removing this data point on the computation of model sharpness is not taken into account. This will lead to an incomplete evaluation of data influence. To take this into consideration, $\hat{\epsilon}_{k, \delta}\left(\omega_{k, \delta}\right) - \hat{\epsilon}(\omega^*)$ also need to be estimated, which is given in the following lemma. 
\begin{lemma}
 The term 
$\hat{{\epsilon}}_{k, \delta}\left(\omega_{k, \delta}\right) - \hat{\epsilon}(\omega^*)$ can be approximated by
\begin{equation}\label{app:epsilon_approx}
\begin{split}
    &\hat{\epsilon}_{k,\delta}(\omega_{k,\delta}) - \hat{\epsilon}(\omega^*)\\
    =& \hat{\epsilon}_{k, \delta}(\omega^*) - \hat{\epsilon}(\omega^*)  + \left. \frac{\mathrm{d} \hat{\epsilon}_{k, \delta}(\omega)}{\mathrm{d} \omega} \right|_{\omega = \omega^*} \cdot (\omega_{k,\delta} - \omega^*).
\end{split}
\end{equation}
\end{lemma}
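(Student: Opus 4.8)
The plan is to read the claimed identity as nothing more than a first-order Taylor expansion of the vector-valued map $\omega \mapsto \hat{\epsilon}_{k,\delta}(\omega)$ about the SAM optimum $\omega^*$, evaluated at the up-weighted optimum $\omega_{k,\delta}$. First I would insert the intermediate point $\hat{\epsilon}_{k,\delta}(\omega^*)$ and split
\[
\hat{\epsilon}_{k,\delta}(\omega_{k,\delta}) - \hat{\epsilon}(\omega^*) = \bigl[\hat{\epsilon}_{k,\delta}(\omega_{k,\delta}) - \hat{\epsilon}_{k,\delta}(\omega^*)\bigr] + \bigl[\hat{\epsilon}_{k,\delta}(\omega^*) - \hat{\epsilon}(\omega^*)\bigr].
\]
The second bracket is exactly the ``direct'' change of the inner maximizer induced by up-weighting $(x_k,y_k)$ while freezing the outer parameter at $\omega^*$, and it already appears verbatim on the right-hand side of (\ref{app:epsilon_approx}); nothing needs to be done to it here. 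So the entire content of the lemma is the expansion of the first bracket.

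For the first bracket I would apply Taylor's theorem to $\hat{\epsilon}_{k,\delta}(\cdot)$ at $\omega^*$ in the direction $\omega_{k,\delta}-\omega^*$, obtaining
\[
\hat{\epsilon}_{k,\delta}(\omega_{k,\delta}) - \hat{\epsilon}_{k,\delta}(\omega^*) = \left.\frac{\mathrm{d}\hat{\epsilon}_{k,\delta}(\omega)}{\mathrm{d}\omega}\right|_{\omega=\omega^*}\!\!\cdot(\omega_{k,\delta}-\omega^*) + R,
\]
where $R$ gathers the second- and higher-order terms. To discard $R$ in the same sense in which all other approximations in this section are taken, I would observe that at $\delta=0$ the up-weighted problem coincides with the original one, so $\omega_{k,\delta}-\omega^* = O(\delta)$ (this is also what the influence-function characterization in Theorem~\ref{HIF-easy} makes explicit) and likewise $\hat{\epsilon}_{k,\delta}(\omega^*)-\hat{\epsilon}(\omega^*) = O(\delta)$; hence, provided $\hat{\epsilon}_{k,\delta}(\cdot)$ is $C^2$ with locally bounded second derivative near $\omega^*$, $\|R\| = O(\|\omega_{k,\delta}-\omega^*\|^2) = O(\delta^2)$. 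Dropping the $O(\delta^2)$ remainder, consistently with the first-order-in-$\delta$ nature of the whole derivation, yields precisely (\ref{app:epsilon_approx}).

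The main obstacle is justifying the regularity used in the previous step: since $\hat{\epsilon}_{k,\delta}(\omega)$ is a constrained $\argmax$ over the ball $\{\|\epsilon\|_p\le\rho\}$, differentiability in $\omega$ is not automatic. I would obtain it from the first-order (KKT) optimality conditions of the inner problem, $\nabla_\epsilon\bigl[\textstyle\sum_i L_S^i(\omega+\epsilon)+\delta L_S^k(\omega+\epsilon)\bigr] = \mu\,\nabla_\epsilon\|\epsilon\|_p^p$ together with $\|\epsilon\|_p=\rho$ (the maximizer lies on the boundary for $\rho$ small), and then apply the implicit function theorem to this system in the variables $(\epsilon,\mu)$; the required non-degeneracy of the bordered Jacobian is a strict second-order sufficiency condition for the inner problem, of the same flavor as the Hessian-invertibility assumptions used elsewhere in the paper, and the smoothness of $\ell$ needed here is the same that licenses the Hessians appearing throughout. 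This delivers $C^1$ (indeed $C^2$) dependence of $\hat{\epsilon}_{k,\delta}$ on $\omega$, closing the argument. I would also note that in the SAM-as-implemented regime, where $\hat{\epsilon}$ is replaced by the one-step surrogate $\rho\,\nabla L_{S,\delta}(\omega)/\|\nabla L_{S,\delta}(\omega)\|_2$, this map is manifestly smooth wherever the gradient is nonzero, so the Taylor step and the $O(\delta^2)$ bound become entirely elementary.
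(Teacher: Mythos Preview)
Your proposal is correct and follows essentially the same approach as the paper: insert the intermediate term $\hat{\epsilon}_{k,\delta}(\omega^*)$, split the difference into two brackets, and Taylor-expand the first bracket at $\omega^*$. In fact, you are more careful than the paper itself, which simply asserts the Taylor step without discussing the remainder or the differentiability of the constrained $\argmax$; your KKT/implicit-function-theorem remark and the $O(\delta^2)$ bookkeeping are additions, not omissions.
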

When $\epsilon\rightarrow 0$, $\hat{\epsilon}_{k,\delta}(\omega^*) - \hat{\epsilon}(\omega^*)\rightarrow 0$, then $\hat{\epsilon}_{k,\delta}(\omega_{k,\delta}) - \hat{\epsilon}(\omega^*)$ can be bounded by $\omega_{k,\delta} - \omega^*$, which is especially useful for the following derivation.
\begin{theorem}\label{HIF-complex}
 Consider the $k$-th data $(x_k, y_k)$ along with a SAM-trained model $\omega^*$. Define the Hessian-based influence function for SAM(SAM-HIF) as:
\begin{align*}
    &\text{SAM-HIF}(x_k, y_k)\\
    =& -\left(H_{\omega} +H_{\omega}\cdot  \frac{\mathrm{d} \hat{\epsilon}(\omega^*)}{\mathrm{d} \omega }\right)^{-1} \cdot\nabla L^k_S\left(\omega^*+\hat{\epsilon}(\omega^*)\right).
\end{align*}
 Then $ \omega_{k}$ can be approximated by:
\begin{equation*}
    \omega_{k} \approx \omega^* -\text{SAM-HIF}(x_k, y_k).
\end{equation*}
\end{theorem}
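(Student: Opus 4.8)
The plan is to follow the classical influence-function argument recalled in the Preliminaries, but to carry along the bilevel coupling that Theorem~\ref{HIF-easy} discards, using Lemma~\ref{danskin} and the perturbation expansion~(\ref{app:epsilon_approx}) as the two key inputs. I would regard $\delta\mapsto\omega_{k,\delta}$ as an implicitly defined curve through $\omega^{*}$ at $\delta=0$ and differentiate its defining optimality condition at $\delta=0$ (an implicit-function-theorem computation). By Danskin's theorem (Lemma~\ref{danskin}), the full SAM gradient collapses to $\nabla L_S(\omega+\hat{\epsilon}(\omega))$, so the stationarity condition for the up-weighted SAM objective reads
\[
  \nabla L_S\!\left(\omega_{k,\delta}+\hat{\epsilon}_{k,\delta}(\omega_{k,\delta})\right)+\delta\,\nabla L_S^{k}\!\left(\omega_{k,\delta}+\hat{\epsilon}_{k,\delta}(\omega_{k,\delta})\right)+\lambda\,\omega_{k,\delta}=0,
\]
which at $\delta=0$ is precisely the stationarity of $\omega^{*}$.

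Next I would Taylor-expand this identity about $\omega^{*}$ to first order in $\Delta\omega:=\omega_{k,\delta}-\omega^{*}$ and in $\delta$. Writing $\omega^{*+}:=\omega^{*}+\hat{\epsilon}(\omega^{*})$, the crucial move is to express the shift of the \emph{perturbed} point through $\Delta\omega$: substituting~(\ref{app:epsilon_approx}) and invoking the remark that follows it (so that $\hat{\epsilon}_{k,\delta}(\omega^{*})-\hat{\epsilon}(\omega^{*})$ is negligible for small $\rho$, and $\frac{\mathrm{d}\hat{\epsilon}_{k,\delta}(\omega)}{\mathrm{d}\omega}$ at $\omega^{*}$ may be replaced by $\frac{\mathrm{d}\hat{\epsilon}(\omega^{*})}{\mathrm{d}\omega}$ up to higher order) gives
\[
  \left(\omega_{k,\delta}+\hat{\epsilon}_{k,\delta}(\omega_{k,\delta})\right)-\omega^{*+}\;\approx\;\left(I+\frac{\mathrm{d}\hat{\epsilon}(\omega^{*})}{\mathrm{d}\omega}\right)\Delta\omega .
\]
Plugging this into the expansion $\nabla L_S(\,\cdot\,)\approx\nabla L_S(\omega^{*+})+\nabla^{2}L_S(\omega^{*+})\bigl[(\omega_{k,\delta}+\hat{\epsilon}_{k,\delta}(\omega_{k,\delta}))-\omega^{*+}\bigr]$, keeping the $\delta$-term only at the base point $\omega^{*+}$, and subtracting the $\delta=0$ stationarity condition leaves the linear system
\[
  \left(\nabla^{2}L_S(\omega^{*+})\Bigl(I+\frac{\mathrm{d}\hat{\epsilon}(\omega^{*})}{\mathrm{d}\omega}\Bigr)+\lambda I\right)\Delta\omega\;=\;-\,\delta\,\nabla L_S^{k}(\omega^{*+}).
\]

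Finally I would rewrite the operator on the left, to the same order of approximation, as $H_{\omega}+H_{\omega}\,\frac{\mathrm{d}\hat{\epsilon}(\omega^{*})}{\mathrm{d}\omega}$ with $H_{\omega}=\nabla^{2}L_S(\omega^{*+})+\lambda I$ (folding the negligible $\lambda\,\frac{\mathrm{d}\hat{\epsilon}(\omega^{*})}{\mathrm{d}\omega}$ cross-term into the approximation), invert it, and read off $\Delta\omega=-\delta\cdot\text{SAM-HIF}(x_k,y_k)$; specializing to $\delta=-1$, which corresponds to full removal of $(x_k,y_k)$, gives $\omega_k=\omega_{k,-1}\approx\omega^{*}-\text{SAM-HIF}(x_k,y_k)$, as claimed.

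I expect the main obstacle to be making the indirect-channel approximation rigorous: justifying that the shift of the perturbed point is captured by $\bigl(I+\frac{\mathrm{d}\hat{\epsilon}(\omega^{*})}{\mathrm{d}\omega}\bigr)\Delta\omega$, that the direct sensitivity $\partial_{\delta}\hat{\epsilon}_{k,\delta}$ of the inner perturbation to up-weighting is genuinely higher order, and that discarding the $\lambda\,\frac{\mathrm{d}\hat{\epsilon}(\omega^{*})}{\mathrm{d}\omega}$ term is harmless — this is exactly the inner/outer coupling that the simplified SAM-IF of Theorem~\ref{HIF-easy} ignores, whereas the surrounding Taylor and implicit-function bookkeeping is routine. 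It is also worth recording that $\frac{\mathrm{d}\hat{\epsilon}(\omega^{*})}{\mathrm{d}\omega}$ has an explicit closed form obtained by differentiating the inner maximizer (e.g.\ $\hat{\epsilon}(\omega)=\rho\,\nabla L_S(\omega)/\norm{\nabla L_S(\omega)}$ under the $\ell_{2}$ constraint), so that SAM-HIF can be evaluated from the trained weights alone.
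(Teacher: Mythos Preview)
Your proposal is correct and follows essentially the same route as the paper: Taylor-expand the Danskin-simplified stationarity condition about $\omega^{*}$, plug in the decomposition~(\ref{app:epsilon_approx}) to convert the shift of the perturbed point into $(I+\tfrac{\mathrm{d}\hat{\epsilon}(\omega^{*})}{\mathrm{d}\omega})\Delta\omega$, and invert the resulting linear operator. The only cosmetic difference is that the paper carries the direct-sensitivity term $H_{\omega}\,\partial_{\delta}\hat{\epsilon}_{k,\delta}(\omega^{*})|_{\delta=0}$ through the derivation and drops it at the very end ``to enhance computation efficiency,'' whereas you discard it up front as higher order; your explicit accounting of the $\lambda\,\tfrac{\mathrm{d}\hat{\epsilon}}{\mathrm{d}\omega}$ cross-term is in fact slightly more careful than the paper's silent passage from $\nabla^{2}L_S$ to $H_{\omega}$.
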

Note that in general, there is no closed-form solution for the term $ \frac{\mathrm{d} \hat{\epsilon}(\omega^*)}{\mathrm{d} \omega }$, to address the issue, we can use the method in ~\citep{foret2020sharpness} to approximate $\hat{\epsilon}$: 
\begin{equation}\label{eq:6}
    \hat{\epsilon}(\omega) = \rho \cdot \text{sign}(\nabla_w  L_{S}(\omega, {0}) )\cdot\frac{|\nabla_w  L_{S}(\omega, {0})  |^{q-1}}{\left( \|\nabla_w  L_{S}(\omega, {0}) \|_q^q \right)^{1/p}},
\end{equation}
where $p$ satisfies $\frac{1}{p}+\frac{1}{q}=1$.
 Based on the assumption that the sign of the gradient $\nabla_w  L_{S}(\omega, {0})$ remains unchanged under the removal of one data point, $\frac{\mathrm{d} \hat{\epsilon}(\omega^*)}{\mathrm{d} \omega^*}$ can be calculated easily.

\subsection{Improving Data Valuation via Gradient Trajectory}\label{method:GIF}
In Theorem~\ref{HIF-complex}, we have to know the exact worst perturbation $\hat{\epsilon}(\omega)$.  However, according to~\citep{foret2020sharpness}, to accelerate the training, the training algorithm for SAM uses the following closed-form estimation of  $\hat{\epsilon}(\omega^*)$ in (\ref{eq:6}) to replace the procedure of finding the model perturbation that results in the largest loss. 
Besides, for the outer-level gradient descent, the second term in (\ref{m1:gradient}) is intentionally dropped during training for simplification, and the final gradient approximation they use is actually $\nabla L_S(w + \hat{\varepsilon}(w))$. See Algorithm \ref{alg:1} for details. 
Therefore, the SAM-HIF we proposed based on the minimization condition represents an idealized case, which sometimes proves insufficiently reliable in practical applications where we cannot find the optimal minimizer. 

\begin{algorithm}
\caption{SAM Algorithm in \citep{foret2020sharpness}  \label{alg:1}}   
\begin{algorithmic}[1]
\REQUIRE Training set $S$, Loss function $L$, Batch size $b$, Step size $\eta > 0$, Neighborhood size $\rho > 0$.  \\
\ENSURE Model trained with SAM  \\
\STATE Initialize weights $w_0$, $t = 0$;  \\
\WHILE{not converged} 
    \STATE Sample batch ${B} = \{(x_1, y_1), \ldots, (x_b, y_b)\}$;  \\
    \STATE Compute gradient $\nabla_w L_{\mathcal{B}}(w_t)$ of batch’s training loss;  \\
    \STATE Compute $\hat{\epsilon}(w)$ per Equation (\ref{eq:6});  \\
    \STATE Compute gradient: $g = \nabla_w L_{{B}}(w_t + \hat{\epsilon}(w_t))$;  \\
    \STATE Update weights: $w_{t+1} = w_t - \eta g$; $t = t + 1$;  \\
\ENDWHILE 
\end{algorithmic}
\end{algorithm}
To bridge the gap, we will next focus on the SAM model trained via the canonical training algorithm in~\citep{foret2020sharpness}, where we have the training trajectory stored in checkpoints. To measure the influence of the $k$-th data in the training dataset, similar to HIF, we firstly define a loss function with the $k$-th term up-weighted by $\delta$ as:
\begin{equation}\label{GIF:loss}
    L_S(\omega;\delta) = L_S(w) + \delta \cdot L_S^k(\omega).
\end{equation}
Consider the model trained based on loss function (\ref{GIF:loss}) using a gradient descent optimizer in~\citep{foret2020sharpness}. At training step $t$, model parameters are updated as
\begin{align*}
 &\omega_{t, \delta} - \omega_{t-1, \delta} \\
=& - \eta_{t-1}\cdot \nabla_{\omega} L_S\left(\omega_{t-1}+\hat{\epsilon}(\omega_{t-1})\right)\\
& - \delta\cdot  \eta_{t-1}\nabla_{\omega}  L_S^k\left(\omega_{t-1,\delta}+\hat{\epsilon}(\omega_{t-1,\delta})\right),
\end{align*}
where $\eta_{t-1}$ is the learning rate at step $t-1$,  and 
$\hat{\epsilon}(\omega_{t-1}, \delta )$ is defined in (\ref{eq:6}). Then, the model trained after $T$ steps becomes 
\begin{equation*}
     \omega_{T, \delta} = \omega_0 - \sum_{t = 0}^{T-1}\eta_{t}\cdot \nabla_{\omega} L_S\left(\omega_{t,\delta}+\hat{\epsilon}(\omega_{t,\delta});\delta\right). 
\end{equation*}
Based on the above discussion, we derive the Gradient-based Influence Function (SAM-GIF) by calculating the derivative of the up-weighted retrained parameter $\omega_{T, \delta}$ for the $k$-th data point with respect to $\delta$.
   \begin{align*}
    &\left. \frac{\mathrm{d}\omega_{T, \delta}}{\mathrm{d}\delta}\right|_{\delta = 0}\\
     =& - \sum_{t = 0}^{T-1}\eta_{t}\cdot \left.  \frac{\mathrm{d} \nabla_{\omega} L_S\left(\omega_{t, \delta}+\hat{\epsilon}(\omega_{t, \delta});\delta\right)}{\mathrm{d} \delta}  \right|_{\delta=0}\\
    &- \sum_{t = 0}^{T-1}\eta_{t}\cdot  {H}_{t,0}\cdot
    \left.\frac{\mathrm{d} (\omega_{t, \delta} + \hat{\epsilon}(\omega_{t, \delta}))}{\mathrm{d} \delta}  \right|_{\delta=0},
\end{align*}
where $H_{t,0}$ is the Hessian matrix of $L_S(\omega_t)$.
To reduce the calculation complexity, we omit the second Hessian term and thus have the following approximation:
 \begin{equation*}
      \text{SAM-GIF}_{\text{GD}}(x_k,y_k) = - \sum_{t = 0}^{T-1}\eta_{t}\cdot \nabla_{\omega}L^k_S(\omega_t + \hat{\epsilon}(\omega_t)).
\end{equation*}
This definition is consistent with~\citep{pruthi2020estimating}. However, compared with gradient descent, Sharpness-Aware Minimization is more often carried out under the framework of SGD optimizer. Under this setting, it is possible that the data $(x_k,y_k)$ may not be used in some updating steps. Therefore, we utilize $B_{k,t}$ to indicate whether $(x_k, y_k)$ is used in the $t$-th gradient descent step. Finally, we have the following result. 
\begin{theorem}\label{GIF}
Assume the model is trained by SGD with the worst perturbation $\hat{\epsilon}(\cdot)$ in each iteration is calculated via (\ref{eq:6}). For the data $(x_k,y_k)$, it  Gradient trajectory-based IF (SAM-GIF) is defined as 
\begin{equation*}
\begin{split}
    \text{SAM-GIF}_{\text{SGD}}(x_k,y_k)
    = \sum_{t=0}^{T-1} \eta_t B_{{k,t}} \nabla_\omega L^k_S(\omega_t + \hat{\epsilon}(\omega_t))
\end{split}
\end{equation*}
Then, the LOO retrained model $\omega_k$ under SAM can be estimated by
\begin{equation*}
   \omega_k\approx \omega^* - \text{SAM-GIF}_{\text{SGD}}(x_k,y_k).
\end{equation*}
\end{theorem}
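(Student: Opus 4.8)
The plan is to imitate the unrolling argument already carried out for the full-batch case, but now tracking which iterations actually touch $(x_k,y_k)$. First I would set up the $\delta$-up-weighted SGD run driven by the loss (\ref{GIF:loss}): at step $t$ the sampled batch is $B_t$, the perturbation $\hat{\epsilon}(\cdot)$ is the closed form (\ref{eq:6}) computed on that batch, and the iterate $\omega_{t,\delta}$ obeys $\omega_{t,\delta}-\omega_{t-1,\delta}=-\eta_{t-1}\nabla_\omega L_{B_{t-1}}(\omega_{t-1,\delta}+\hat{\epsilon}(\omega_{t-1,\delta}))-\delta\,\eta_{t-1}B_{k,t-1}\nabla_\omega L_S^k(\omega_{t-1,\delta}+\hat{\epsilon}(\omega_{t-1,\delta}))$, where $B_{k,t-1}$ indicates whether $(x_k,y_k)\in B_{t-1}$. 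Telescoping from $\omega_0$ gives $\omega_{T,\delta}=\omega_0-\sum_{t=0}^{T-1}\eta_t\big(\nabla_\omega L_{B_t}(\omega_{t,\delta}+\hat{\epsilon}(\omega_{t,\delta}))+\delta B_{k,t}\nabla_\omega L_S^k(\omega_{t,\delta}+\hat{\epsilon}(\omega_{t,\delta}))\big)$, the SGD analogue of the expression displayed in the GD derivation; at $\delta=0$ this reproduces the stored training trajectory $\omega_0,\dots,\omega_T=\omega^*$.

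Next I would differentiate $\omega_{T,\delta}$ in $\delta$ and evaluate at $\delta=0$. Exactly as in the GD case, the chain rule splits the result into an \emph{explicit} part, $-\sum_{t=0}^{T-1}\eta_t B_{k,t}\nabla_\omega L_S^k(\omega_t+\hat{\epsilon}(\omega_t))$, and an \emph{implicit} part, $-\sum_{t=0}^{T-1}\eta_t H_{t,0}\big(I+\tfrac{\mathrm{d}\hat{\epsilon}(\omega_t)}{\mathrm{d}\omega_t}\big)\tfrac{\mathrm{d}\omega_{t,\delta}}{\mathrm{d}\delta}\big|_{\delta=0}$, coming from the dependence of each argument $\omega_{t,\delta}+\hat{\epsilon}(\omega_{t,\delta})$ on $\delta$ through the earlier iterates (the $\hat{\epsilon}$-Jacobian here is handled by the same sign-invariance assumption already invoked for SAM-HIF). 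This is a recursion, since $\tfrac{\mathrm{d}\omega_{t,\delta}}{\mathrm{d}\delta}|_{\delta=0}$ reappears inside the implicit part at every later step; unrolling once more turns it into an accumulated product of operators $\big(I-\eta_t H_{t,0}(I+\mathrm{d}\hat{\epsilon}/\mathrm{d}\omega)\big)$ acting on the explicit terms. Following \citep{pruthi2020estimating} I would discard this whole implicit (Hessian-trajectory) part, leaving $\tfrac{\mathrm{d}\omega_{T,\delta}}{\mathrm{d}\delta}|_{\delta=0}\approx-\sum_{t=0}^{T-1}\eta_t B_{k,t}\nabla_\omega L_S^k(\omega_t+\hat{\epsilon}(\omega_t))$.

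Finally, as in the preliminary discussion of the influence function, the leave-one-out SAM model is $\omega_k=\omega_{T,-1}$, so a first-order Taylor expansion in $\delta$ at $\delta=0$ yields $\omega_k\approx\omega^*-\tfrac{\mathrm{d}\omega_{T,\delta}}{\mathrm{d}\delta}|_{\delta=0}$; plugging in the approximation from the previous step gives the claimed $\omega_k\approx\omega^*-\text{SAM-GIF}_{\text{SGD}}(x_k,y_k)$ with $\text{SAM-GIF}_{\text{SGD}}(x_k,y_k)=\sum_{t=0}^{T-1}\eta_t B_{k,t}\nabla_\omega L_S^k(\omega_t+\hat{\epsilon}(\omega_t))$ (the sign being fixed by the up-weighting convention, consistent with \citep{pruthi2020estimating}). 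Note that when every batch equals the full dataset this collapses to $\text{SAM-GIF}_{\text{GD}}$, so the SGD statement extends the GD one.

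I expect the main obstacle to be exactly this implicit recursion: the honest derivative of $\omega_{T,\delta}$ is an exact identity involving the accumulated product of transition operators $I-\eta_t H_{t,0}(I+\mathrm{d}\hat{\epsilon}/\mathrm{d}\omega)$, and the clean checkpoint-only formula emerges only after (i) dropping those accumulated second-order terms and (ii) controlling the perturbation Jacobian $\mathrm{d}\hat{\epsilon}/\mathrm{d}\omega$ via the sign-stability assumption — these are the two steps that turn the equality into an approximation. If a quantitative version is wanted, I would bound each discarded contribution by $O(\eta_t^2)$ under boundedness and Lipschitz conditions on $\nabla^2 L_S$ and on $\mathrm{d}\hat{\epsilon}/\mathrm{d}\omega$ along the trajectory, so that the total error is controlled by $\sum_t \eta_t^2$.
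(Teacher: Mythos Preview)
Your proposal is correct and follows essentially the same route as the paper's derivation in Section~\ref{method:GIF}: set up the $\delta$-up-weighted iteration, telescope, differentiate at $\delta=0$ to split into an explicit term and an implicit Hessian-trajectory term, drop the latter, and Taylor-expand at $\delta=-1$. The only cosmetic difference is that the paper first carries this out for full-batch GD and then inserts the indicator $B_{k,t}$ to pass to SGD, whereas you start directly in the SGD setting; your extra remarks on the accumulated transition operators and the $O(\sum_t\eta_t^2)$ error estimate go slightly beyond the paper, which simply discards the Hessian term without quantification.
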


\subsection{Computation Acceleration and Practical Applications}
\subsubsection{Computation Acceleration}\label{accelerate}
The SAM-HIF outlined in Section \ref{HIF} requires calculations of the inverse Hessian-vector product (iHVP). To enhance the scalability of SAM-HIF, we will introduce one efficient acceleration technique to expedite the computation of iHVP.

\noindent{\bf Neumann Series Approximation Method.}
The calculation in Proposition \ref{HIF-easy} and Theorem \ref{HIF-complex} is expressed as $-H^{-1}\cdot G$, where $H$ denotes the Hessian and $G$ is the gradient. Then, with the help of the Neumann series:
\begin{equation*}
    \begin{split}
        H^{-1} \cdot G &= \left(I - (I-H) \right)^{-1} \cdot G = G + \sum_{j=1}^{+\infty} (I-H)^j \cdot G.
    \end{split}
\end{equation*}
By truncating this series at order $J$, we derive the following approximation:
$$H^{-1} \cdot G \approx G+ (I-H) \cdot G+ \cdots (I-H)^J \cdot G.$$
It is important to clarify that we did not specifically focus on accelerating the inversion of the Hessian matrix; instead, we optimized the iHVP process directly. This approach eliminates the need to store the large Hessian matrix during the computation, which significantly reduces the memory requirements of our method.

\subsubsection{Practical Applications}\label{application}
This section presents several downstream tasks based on the previously derived SAM-HIF and SAM-GIF for different scenarios.

\noindent {\bf Model Editing.}
We can use IFs to update the model under the removal of certain data. Specifically, the model after removing the $k$-th data point can be estimated as $\omega^*- \text{SAM-HIF}(x_k,y_k;\omega^*)$. Additionally, if the training gradient trajectory is available, we can obtain an efficient and accurate estimate as $\omega^*- \text{SAM-GIF}(x_k,y_k;\omega^*)$.

\noindent{\bf Data Evaluation.}
By appropriately selecting the model evaluation function, we can define influence scores (ISs) for individual data points. These scores can then be applied to data selection to enhance SAM performance.
\begin{definition}
{\bf (Evaluation Function)}
Given a validation  dataset defined as $D_{val} = \{(x_t,y_t)\}_{t=1}^n$. the SAM learned parameter $\omega^*$ performance on the test task is defined as $\sum_{(x,y)\in D_{val}}\ell(x,y;\omega^*)$.
\end{definition}
Based on this metric, we can propose an evaluation method for the data influence. 
\begin{theorem}\label{IS}
Given a validation dataset defined as $D_{val} = \{(x_t,y_t)\}_{t=1}^n$.
Denote the SAM-retrained model after the removal of $(x_k,y_k)$ as $\omega^*_{-k}$, then 
\begin{equation*}
\begin{split}
       &\sum_{(x,y)\in D_{val}}\ell(x,y;\omega^*) - \sum_{(x,y)\in D_{val}}\ell(x,y;\omega^*_{-k}) \\
       \approx &\sum_{(x,y)\in D_{val}} \nabla \ell(x,y;\omega^*) \cdot \text{IF}(x,y)\triangleq \text{IS}(x,y), 
\end{split}
\end{equation*}
where IF can be  SAM-HIF or SAM-GIF. We define the right hand of the above equation as the Influence Score (IS). 
\end{theorem}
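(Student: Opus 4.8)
The plan is to derive the first-order change in the evaluation function under an infinitesimal up-weighting of $(x_k,y_k)$, then specialize $\delta$ to $-1$ to recover the leave-one-out case, exactly mirroring the development in the Preliminaries for the classical influence function. First I would define the evaluation function $f(\omega) = \sum_{(x,y)\in D_{val}}\ell(x,y;\omega)$ and observe that, since $D_{val}$ is fixed and $\ell$ is assumed differentiable in $\omega$, the quantity of interest $f(\omega^*) - f(\omega^*_{-k})$ can be written as $f(\omega^*) - f(\omega_{k,-1})$ where $\omega_{k,\delta}$ is the up-weighted retrained parameter introduced in Section~\ref{HIF}. A first-order Taylor expansion of $f$ around $\omega^*$ gives $f(\omega_{k,\delta}) - f(\omega^*) \approx \nabla f(\omega^*)^\top (\omega_{k,\delta} - \omega^*)$, with the remainder controlled by $\|\omega_{k,\delta} - \omega^*\|^2$ times a bound on $\nabla^2 f$; this is the same order of approximation already accepted throughout the paper.

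Next I would substitute the parameter-level estimates already established. By Theorem~\ref{HIF-complex} (or Theorem~\ref{HIF-easy} in the simplified regime, or Theorem~\ref{GIF} when the gradient trajectory is available), we have $\omega_{k,-1} = \omega_k \approx \omega^* - \mathrm{IF}(x_k,y_k)$, where $\mathrm{IF}$ stands for SAM-HIF or SAM-GIF as appropriate. Plugging $\delta = -1$ into the Taylor expansion yields
\begin{equation*}
f(\omega^*) - f(\omega_k) \approx -\nabla f(\omega^*)^\top(\omega_k - \omega^*) \approx \nabla f(\omega^*)^\top \mathrm{IF}(x_k,y_k).
\end{equation*}
Unpacking $\nabla f(\omega^*) = \sum_{(x,y)\in D_{val}} \nabla\ell(x,y;\omega^*)$ and distributing the inner product over the sum gives precisely $\sum_{(x,y)\in D_{val}} \nabla\ell(x,y;\omega^*)\cdot \mathrm{IF}(x,y)$, which is the claimed expression for $\mathrm{IS}$. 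I would also note the alternative (and essentially equivalent) route of differentiating $f(\omega_{k,\delta})$ at $\delta = 0$ via the chain rule, obtaining $\left.\tfrac{d}{d\delta} f(\omega_{k,\delta})\right|_{\delta=0} = \nabla f(\omega^*)^\top \left.\tfrac{d\omega_{k,\delta}}{d\delta}\right|_{\delta=0} = -\nabla f(\omega^*)^\top \mathrm{IF}(x_k,y_k)$, and then using the linear approximation in $\delta$ at $\delta=-1$.

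The main obstacle is not the algebra but making the two layers of approximation honest: the parameter estimate $\omega_k \approx \omega^* - \mathrm{IF}(x_k,y_k)$ already carries the error from neglecting the second-order perturbation terms and (for SAM-HIF) from the sign-stability assumption on $\nabla_w L_S$ used to compute $\tfrac{d\hat{\epsilon}}{d\omega}$, and the Taylor expansion of $f$ adds a further $O(\|\omega_k - \omega^*\|^2)$ term. I would therefore state the result as an approximation consistent with the preceding theorems rather than as an exact identity, and if a cleaner statement is wanted, assume $\ell(x,y;\cdot)$ is twice continuously differentiable with bounded Hessian on a neighborhood of $\omega^*$ so that the composite error is genuinely second order in the perturbation size $\|\mathrm{IF}(x_k,y_k)\|$. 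No new machinery is needed beyond Danskin's theorem (already invoked in Lemma~\ref{danskin}) and the parameter-level influence functions proved earlier.
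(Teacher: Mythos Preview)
Your proposal is correct and follows essentially the same route as the paper: a first-order Taylor expansion of the validation loss around $\omega^*$, followed by substituting the parameter-level approximation $\omega^* - \omega^*_{-k} \approx \mathrm{IF}(x_k,y_k)$ from the earlier theorems. The paper's proof is in fact just the two-line version of what you wrote, without the additional discussion of error terms or the alternative chain-rule derivation.
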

A positive IS indicates that removing the data point will deteriorate the model's performance on the test dataset. Thus, this data point is valuable for model performance. Assigning an IS to each training data point allows us to identify useful and harmful data for the model’s performance. Detailed theoretical derivations are provided in the Appendix.

\section{Experiments}\label{sec:exp}
In this section, we demonstrate our main experimental results on utility, efficiency, effectiveness, and the abilities to identify harmful data and enhance interpretability.  Details and additional results are in Appendix due to space limit.

\subsection{Experimental Settings}
\noindent{\bf Dataset.} To evaluate the performance of our algorithm, we conducted experiments using four datasets: CIFAR10, CIFAR100~\cite{alex2009learning}, MiniImageNet~\cite{5206848}, MNIST~\cite{lecun1998gradient}, and HAM10000~\cite{tschandl2018ham10000}. 
MiniImageNet, a subset of the larger ImageNet dataset, is commonly employed in few-shot learning research due to its diverse class representations and moderate resolution. 

\noindent{\bf Baselines.} We employ retraining as the baseline method since the data valuation problem in SAM has never been investigated, and there are no other baselines. \textit{Retrain}: We retrain the model via SAM after removing the data from the training set. 
In scenarios where the model training trajectory cannot be obtained, we employ SAM-HIF (fast) and SAM-HIF. \textit{SAM-HIF (fast)}: SAM-HIF (fast) is a direct implementation of our previous Theorem \ref{HIF-easy}. \textit{SAM-HIF}: SAM-HIF is a direct implementation of our previous Theorem \ref{HIF-complex}. 
    
In scenarios where the model training trajectory is available, we employ SAM-GIF. \textit{SAM-GIF}: SAM-GIF is a direct implementation of our previous Theorem \ref{GIF}. For the methods mentioned above, we provide corresponding algorithms in the Appendix. SAM-HIF (fast) and SAM-HIF are accelerated by the Neumann Series approximation method discussed in Section \ref{accelerate}. We provide implementation algorithms for SAM-HIF (fast), SAM-HIF and SAM-GIF in the Appendix.

\noindent{\bf Evaluation Metric.} We used two primary evaluation metrics to assess our models: accuracy and runtime (RT). Accuracy evaluates the model’s performance by measuring the proportion of correctly classified instances out of the total instances. Runtime(RT), measured in seconds, assesses the time required for each method to update the model.

\noindent{\bf Implementation Details.} We conducted experiments using the Nvidia RTX 4090-24G GPU. For utility evaluation, we randomly selected samples at different proportions from four datasets. For valuable (or harmful) samples, we removed 0-10 \% of the data identified as valuable (or harmful) by the algorithm. Additionally, the removal process was repeated five times using different random seeds to obtain experimental results.

\subsection{Evaluation of Utility and Editing Efficiency}
We first demonstrate the results on accuracy and time consumption of three algorithms, SAM-HIF (fast), SAM-HIF, and SAM-GIF, against the retrain method. We selected the WideResNets architecture as the backbone network for the classification task. Our experimental results are presented in Table~\ref{tab:results:main}. It is evident that our three proposed algorithms significantly improve computational efficiency without sacrificing accuracy. On the CIFAR-10 dataset, the time cost of retraining reached 3516.74, while our methods notably enhance computational efficiency, with SAM-HIF (fast), SAM-HIF, and SAM-GIF reducing the time to 11.0698, 41.4960, and 4.894 seconds, respectively. The accuracy differences between SAM-HIF (fast), SAM-HIF, and SAM-GIF compared to retrain are 0.0177, 0.0057, and 0.0003, respectively. These findings suggest that SAM-HIF (fast), SAM-HIF, and SAM-GIF can save substantial computational time required for retraining while achieving comparable accuracy. 
\begin{table*}[ht]
\centering
\caption{Performance comparison on CIFAR-10, CIFAR-100, and MINI-Imagenet.}
\label{tab:results:main}
\resizebox{\linewidth}{!}{
\begin{tabular}{lcccccc}
\toprule
    \multirow{3}{*}{\textbf{Method}} & \multicolumn{2}{c}{\textbf{Cifar10}} & \multicolumn{2}{c}{\textbf{Cifar100}} & \multicolumn{2}{c}{\textbf{MINI-Imagenet}} \\
    \cmidrule(r){2-3} \cmidrule(r){4-5} \cmidrule(r){6-7} 
    & \textbf{Accuracy} & \textbf{RT (second)} & \textbf{Accuracy} & \textbf{RT (second)} & \textbf{Accuracy} & \textbf{RT (second)} \\
\midrule
    Retrain & 0.9500$\pm$0.0061 & 3516.74$\pm$8.45 & 0.7890$\pm$0.0311 & 3244.24$\pm$2.18 & 0.6835$\pm$0.0460 & 682.56$\pm$7.91 \\
    SAM-HIF(Fast) & 0.9323$\pm$0.0110 &  11.0698$\pm$3.41 & 0.7208$\pm$0.0142 &  13.0698$\pm$4.16 & 0.6478$\pm$0.0125 &  11.3218$\pm$2.43 \\
    SAM-HIF & 0.9443$\pm$0.0121 & 41.4960$\pm$3.21 & 0.7213$\pm$0.0239 & 42.412$\pm$3.24 & 0.6516$\pm$0.0213 & 39.212$\pm$3.09 \\
    SAM-GIF & 0.9497$\pm$0.0142 & 4.8942$\pm$1.42 & 0.7227$\pm$0.0469 & 6.8942$\pm$2.31 & 0.6446$\pm$0.0122 & 5.8942$\pm$1.122 \\
\bottomrule
\end{tabular}}
\vspace{-7pt}
\end{table*}

Furthermore, we observe that by avoiding the computation of the Hessian matrix, SAM-GIF requires less time than both SAM-HIF (Fast) and SAM-HIF. Besides, SAM-GIF not only surpasses SAM-HIF and SAM-HIF (Fast) in speed but also achieves accuracy comparable to retraining. This corroborates our previous discussion: compared to the SAM model, gradient trajectories can more accurately reflect the training of the SAM model, leading to a more precise estimation result. By avoiding the computation of the Hessian matrix, SAM-GIF demonstrates better time efficiency than other methods while also showing improved accuracy on CIFAR-10. These findings indicate that utilizing gradient information effectively enhances both training efficiency and predictive accuracy. Moreover, we can see SAM-HIF achieves higher accuracy than SAM-HIF (Fast) by incorporating perturbation-related components into the Hessian matrix; however, this comes at the cost of a 2-3$\times$ increase in runtime. Mathematically, as we mentioned in Theorem \ref{HIF-complex}, SAM-HIF further considers the change of perturbations due to the data removal. This can make it evaluate data influence more accurately but with additional computation time. Please refer to section \ref{app:exp:efficiency} for more results.

We also compare with several baselines. We choose two algorithms: TARK~\cite{park2023trak} and IF-EKFAC. The results are listed in Table \ref{tab:results_short}. We tested it on the Cifar-10 dataset. We first trained the Wide-Resnet model for 60 epochs and obtained an accuracy of 0.8132. Then, we compute the scores for all the training data and retrain by removing the bottom 10\% of the data. For SAM-GIF, we also use our algorithm to estimate the parameters.

\begin{table*}[ht]
\centering
\caption{Performance comparison of different methods on Cifar10 dataset.}
\label{tab:results_short}
\resizebox{0.5\linewidth}{!}{
\begin{tabular}{lcc}
\toprule
    \multirow{2}{*}{\textbf{Method}} & \multicolumn{2}{c}{\textbf{Cifar10}} \\
    \cmidrule(r){2-3}
    & \textbf{Accuracy} & \textbf{RT (second)} \\
\midrule
    TARK & 0.8321$\pm$0.0061 & 3314.23$\pm$10 \\
    IF-EKFAC & 0.8323$\pm$0.0110 &   3338.94$\pm$10 \\
    SAM-GIF(Retrain) &  0.8423$\pm$0.0142 & 3231.49$\pm$10 \\
    SAM-GIF(GIF) & 0.8401$\pm$0.0142 & 5.1442$\pm$1.42 \\
\bottomrule
\end{tabular}}
\vspace{-2pt}
\end{table*}

Furthermore, we perform experiments on Resnet-50~\cite{he2016deep} and the results are shown in Table \ref{tab:50-results}.

\begin{table*}[ht]
\centering
\caption{Performance comparison on \textit{Resnet50}.}
\label{tab:50-results}
\resizebox{0.8\textwidth}{!}{ 
\begin{tabular}{lcccc}
\toprule
    \multirow{3}{*}{\textbf{Method}} & \multicolumn{2}{c}{\textbf{CUB}} & \multicolumn{2}{c}{\textbf{FOOD-101}} \\
    \cmidrule(r){2-3} \cmidrule(r){4-5}
    & \textbf{Accuracy} & \textbf{RT (s)} & \textbf{Accuracy} & \textbf{RT (s)} \\
\midrule
    Retrain & 0.6336$\pm$0.005 & 2197.33$\pm$12.47 & 0.7436$\pm$0.031 & 15900.24$\pm$25.32 \\
    SAM-HIF & 0.6213$\pm$0.005 & 37.2590$\pm$3.78 & 0.7022$\pm$0.024 & 41.321$\pm$4.12 \\
     SAM-HIF(Fast) & 0.6178$\pm$0.003 & 25.3214$\pm$5.16 & 0.6918$\pm$0.014 & 26.0698$\pm$3.21 \\
    SAM-GIF & 0.6244$\pm$0.004 & 12.6789$\pm$2.33 & 0.7189$\pm$0.047 & 13.234$\pm$3.23 \\
\bottomrule
\end{tabular}}
\label{tab:dataset}
\end{table*}

\begin{figure}[htbp]
    \centering
    \begin{subfigure}{0.37\linewidth}
        \centering
\includegraphics[width=\linewidth]{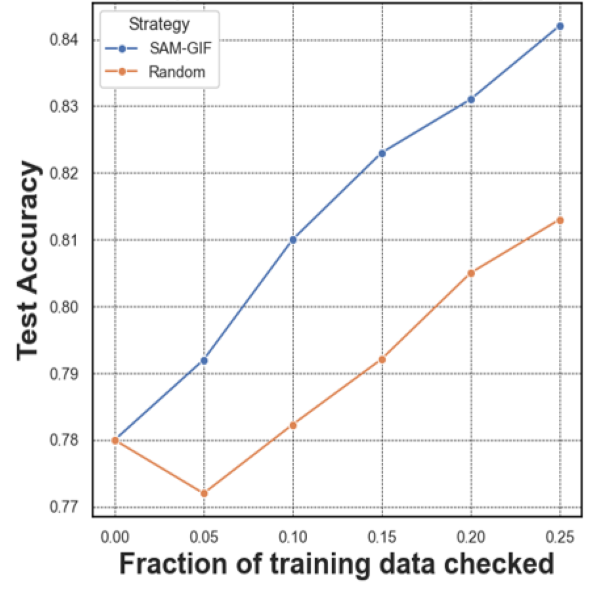}
        \caption{Test Accuracy}
        \label{fig:hdr:ad}
    \end{subfigure}%
    \begin{subfigure}{0.37\linewidth}
        \centering
        \includegraphics[width=\linewidth]{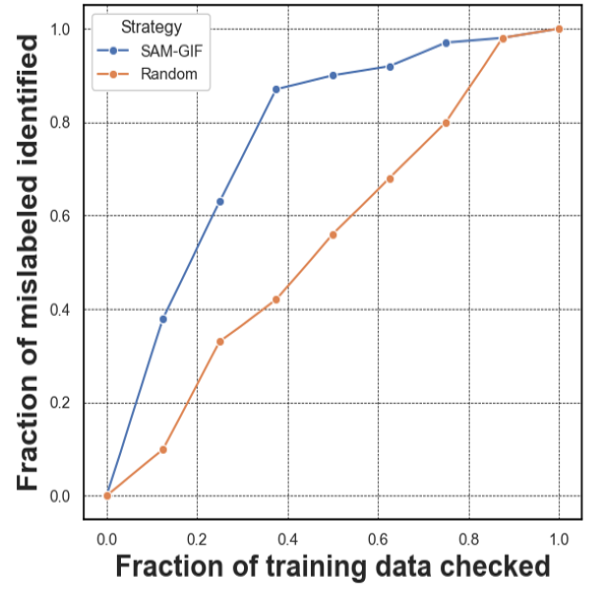}
        \caption{Harmful Data Removal}
        \label{fig:accuracy:ad}
    \end{subfigure}
    \caption{Harmful data removal experiment. IS means using the influence score to determine which sample to remove. Random refers to randomly removing tasks}
    \label{fig:3}
\vspace{-12pt}
\end{figure}
\subsection{Evaluation of Effectiveness for Data Attribution}
Here, we aim to show that our proposed influence functions can be used to attribute data.  To achieve this, we traverse the training samples by calculating the IS value for each sample to select the most valuable data. We then remove these samples and update the model through retraining or using the SAM-HIF (Fast), SAM-HIF, and SAM-GIF, and analyzing the changes in accuracy relative to the accuracy before removal. In detail, we first calculate the influence scores of all samples and select the most valuable ones by ranking them. Then, the top k most valuable data points are removed, where k ranges from 2\% to 10\% of the data size. Next, the model is retrained multiple times using different random seeds or updated using our different methods. The test set accuracy is then calculated after updating the parameters. Additionally, we randomly delete the same proportion of samples as a control experiment.

Our experimental results, as shown in the Figure \ref{fig:datasets}, indicate that after removing the most valuable training data, the accuracy of the updated model decreases across different datasets. For instance, for the CIFAR-10 dataset, when the removal ratios are 0.02, 0.05, 0.07, and 0.10, the accuracy scores for retraining and the SAM-HIF (Fast), SAM-HIF, and SAM-GIF algorithms gradually decrease. At the 0.10 removal ratio, the accuracy for the four methods drops to 0.8432, 0.8212, 0.8412, and 0.8311, respectively. The accuracy of different approaches is similar to that of retraining (with a maximum difference of 0.022), which demonstrates that our algorithm significantly reduces computational time while maintaining accuracy. In contrast, random deletion shows more significant variability, and while an increase in the deletion ratio may lead to an increase in accuracy, it is less stable.

\subsection{Results of Identifying Harmful Data}
\begin{figure}
    \centering
    \begin{subfigure}[b]{0.45\linewidth}
        \centering
        \includegraphics[width=\linewidth]{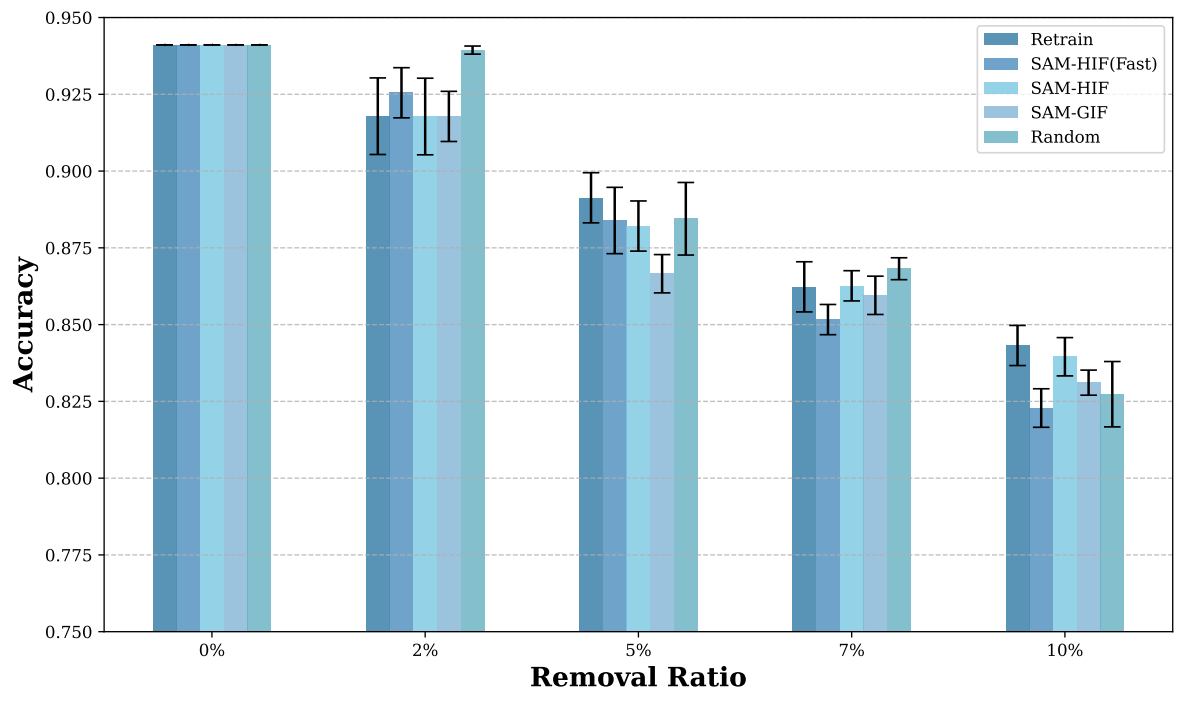}
        \caption{Cifar10}
        \label{fig:cifar100}
    \end{subfigure}
    \hfill
    \begin{subfigure}[b]{0.45\linewidth}
        \centering
        \includegraphics[width=\linewidth]{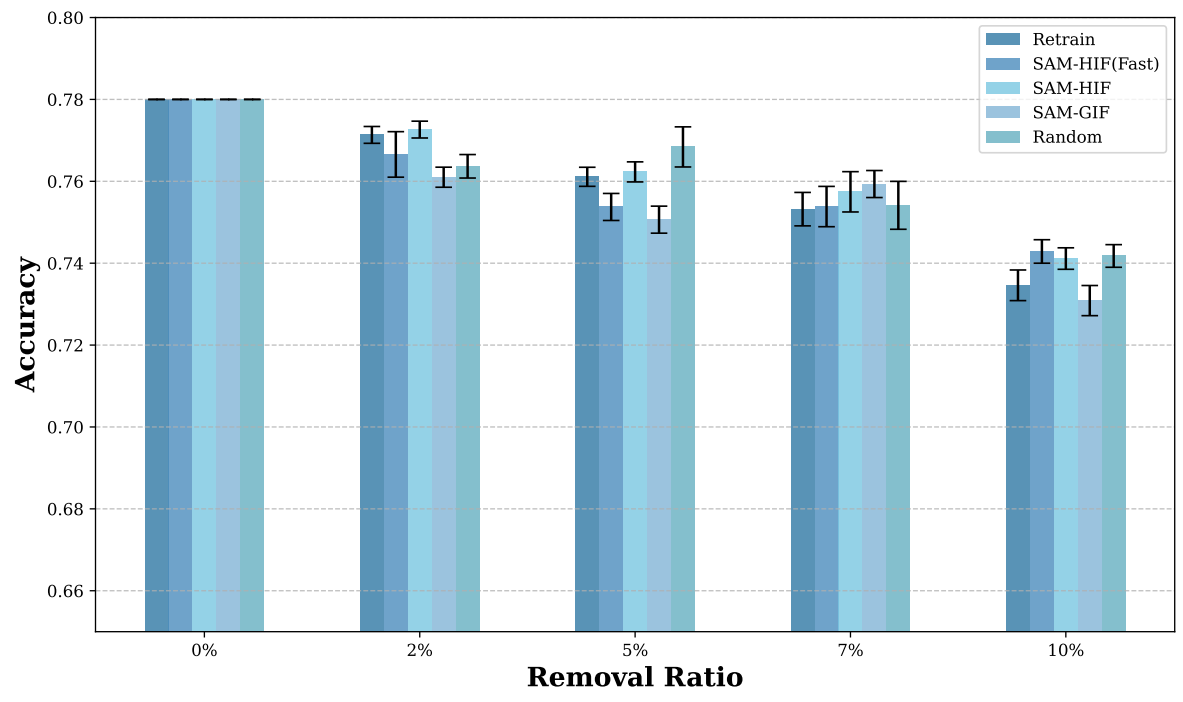}
        \caption{Cifar100}
        \label{fig:cifar10}
    \end{subfigure}

    \vspace{1em} 

    \begin{subfigure}[b]{0.45\linewidth}
        \centering
        \includegraphics[width=\linewidth]{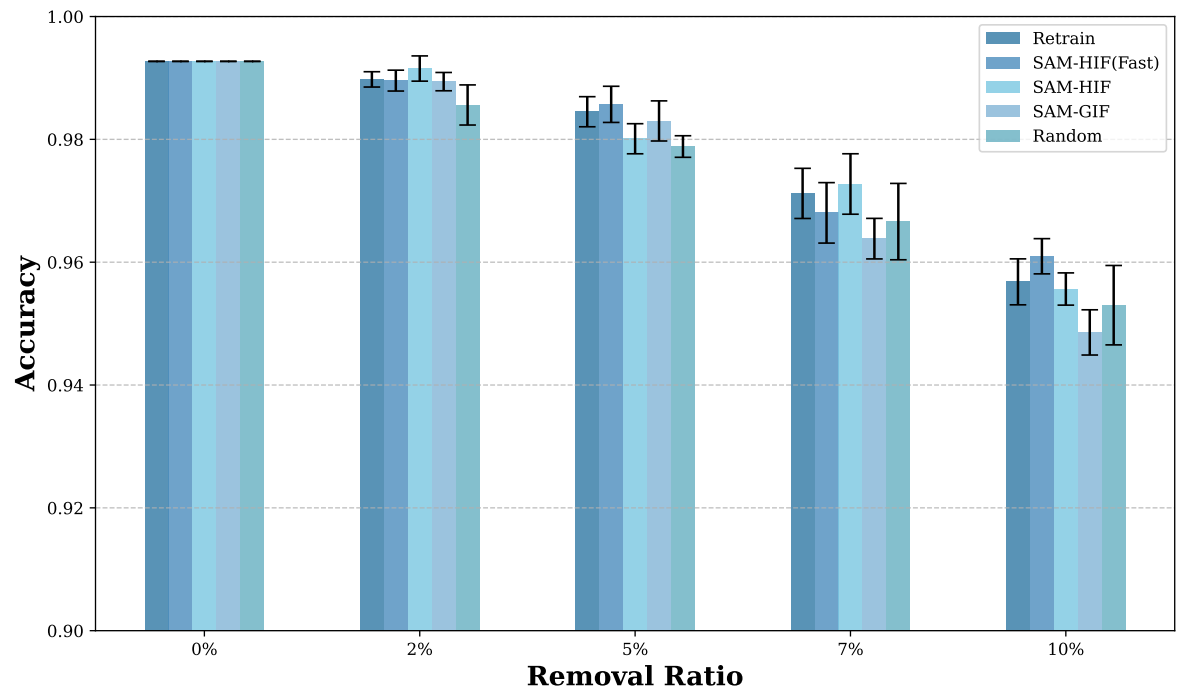}
        \caption{Mnist}
        \label{fig:mnist}
    \end{subfigure}
    \hfill
    \begin{subfigure}[b]{0.45\linewidth}
        \centering
        \includegraphics[width=\linewidth]{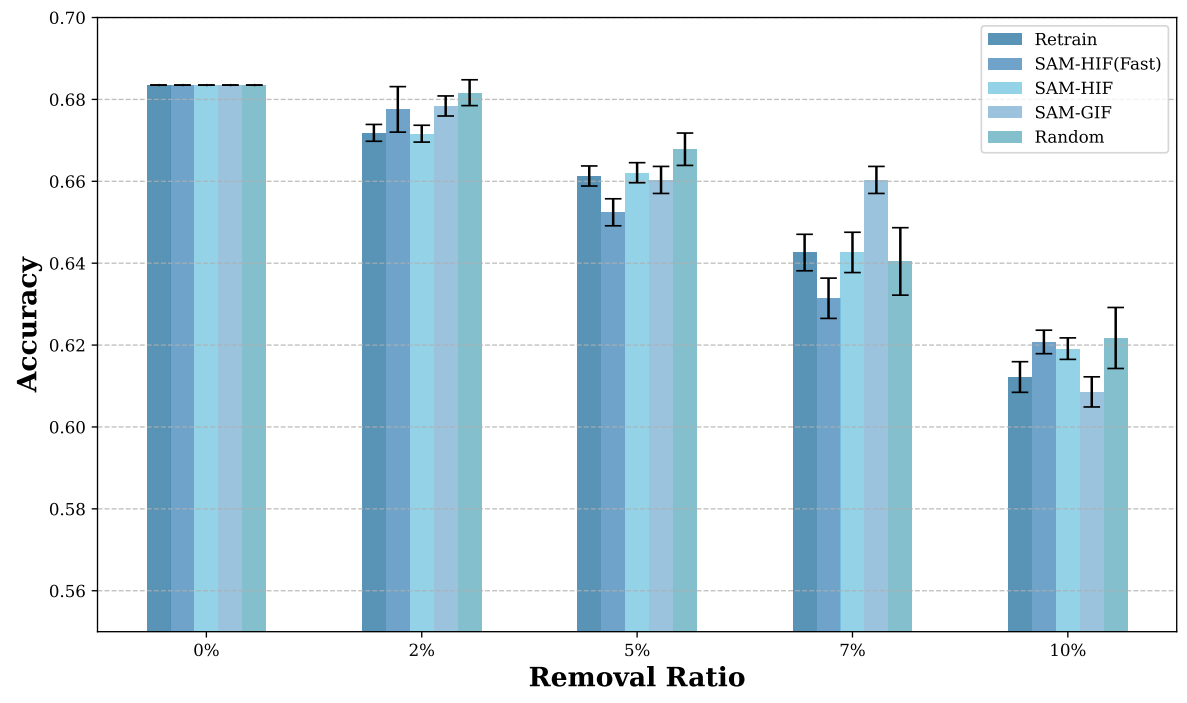}
        \caption{Miniimagenet}
        \label{fig:miniimagenet}
    \end{subfigure}
    \vspace{-5pt}
    \caption{Accuracy performance on four datasets after removing varying proportions of helpful data using task-related IS}
    \label{fig:datasets}
\vspace{-12pt}
\end{figure}
 To demonstrate the ability of our algorithm to identify harmful data for a specific task, we introduce noise into the CIFAR-10 dataset and conduct experiments on the noisy dataset. Specifically, we randomly select 10\% of the CIFAR-10 data samples and invert their labels, then use the noisy dataset to train the model using the SAM method. The training setup remains consistent with previous configurations. We compute the IS of the samples using SAM-GIF, and then identify the data as harmful if its IS is low.  We then compare the model accuracy after retraining with different proportions of harmful data points randomly removed. Figure \ref{fig:3}a shows the change in model accuracy after removing varying proportions of harmful data, indicating that as the proportion of harmful data removed increases, the model's accuracy on the test set also improves. When the removal rate reaches 0.25, the accuracy increases by approximately 0.06. For random deletion, the accuracy exhibits considerable instability; initially, the accuracy decreases around a removal rate of 0.05, but it gradually increases as the removal rate increases. Figure \ref{fig:3}b illustrates the proportion of noisy data detected after removing training samples. Compared to random deletion, our algorithm demonstrates higher efficiency in detecting noisy data. At a detection rate of approximately 0.4, we have identified over 90\% of the noisy data. The experimental results for the remaining datasets can be found in the Appendix. 

 We also compare the performance of our method with TARK~\cite{park2023trak}, Influence Function (IF), Influence Function-EKFAC (IF-EKFAC) in the harmful data recogonization task. 

\begin{figure}
    \centering
\includegraphics[width=0.9\linewidth]{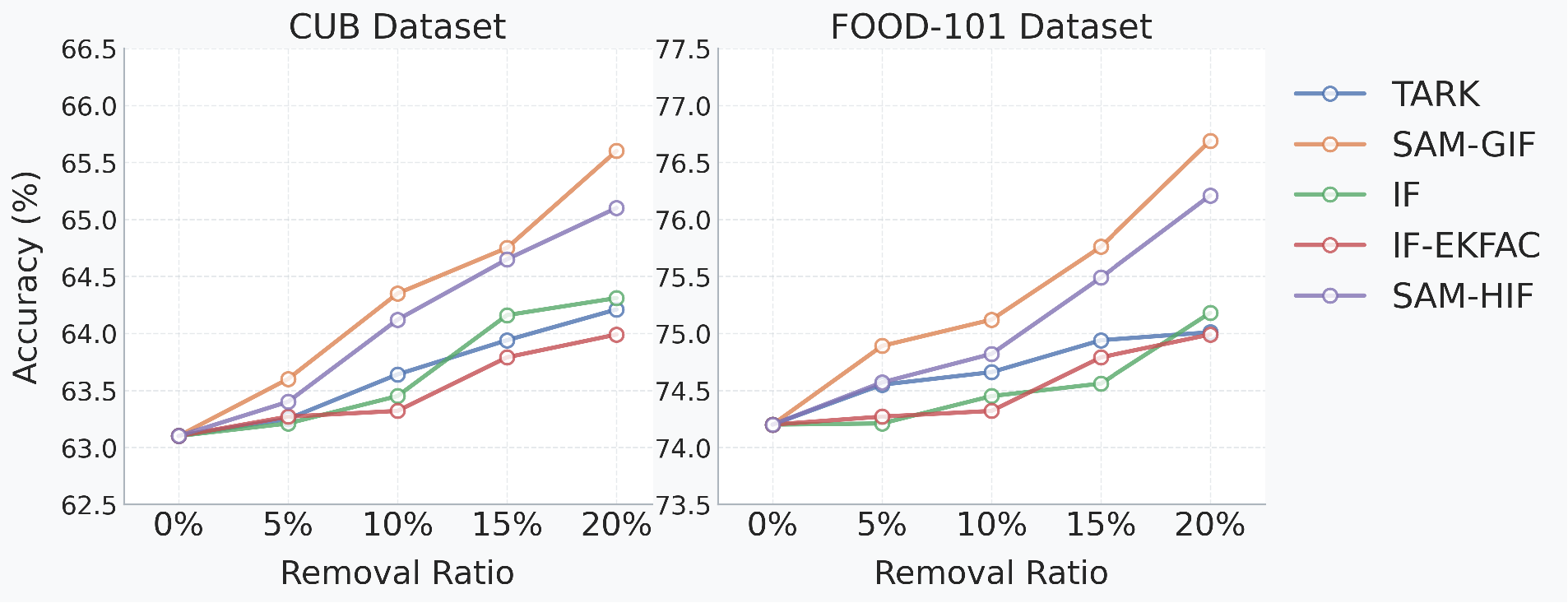}
    \caption{Harmful data removal results compared with baselines.}
    \label{fig:baselin}
   \vspace{-7pt}
\end{figure}

\subsection{Results on Interpretability}
\begin{figure}
    \centering
    \includegraphics[width=0.95\linewidth]{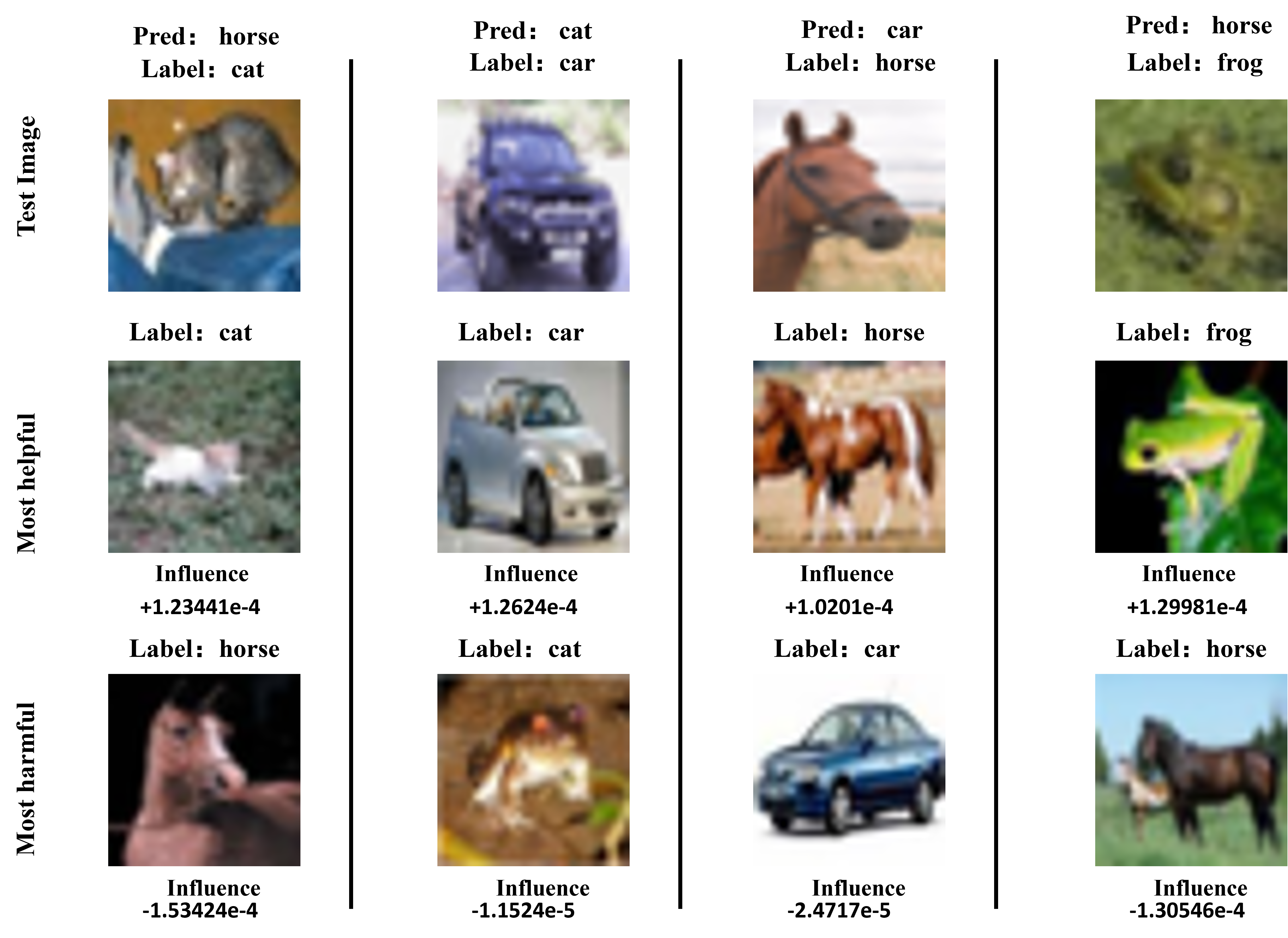}
    \caption{The most helpful and harmful training data tracked by misclassified data}
    \label{fig:enter-label}
\vspace{-15pt}
\end{figure}
We apply SAM-GIF to identify the training samples that are most relevant to specific prediction errors in the test samples. In this process, we select samples from the test data where the model misclassifies. By calculating the IS of the samples, we identify and visualize harmful data with a negative IS that leads to erroneous predictions, namely. Figure~\ref{fig:enter-label} presents the visualization of this error prediction tracing process for the CIFAR-10 dataset (more visualization results are in Appendix). The first row displays examples of misclassified test samples, the second row shows the most influential training data for classifying this sample, and the third row shows the most harmful training data for this classification. We can trace the outcomes of the error prediction process through the visualization results.

\subsection{Ablation Study}
We conducted ablation experiments on above three methods, SAM-HIF (fast), SAM-HIF and SAM-GIF. We randomly removed 1\%-8\% of the training samples from CIFAR-10 and CIFAR-100 datasets, and evaluated the model parameters using SAM-HIF(fast) and SAM-HIF, with retrain serving as the ground truth. The results, shown in Figure \ref{fig:4}, clearly demonstrate that SAM-HIF consistently outperforms SAM-HIF(Fast) during the process of data removal. Also, we can see both methods consistently have slight differences with the ground truth. Results for SAM-GIF are included in the Appendix.
\begin{figure}[htbp]
    \centering
    \begin{subfigure}{0.37\linewidth}
        \centering
        \vspace{-5pt}
        \includegraphics[width=\linewidth]{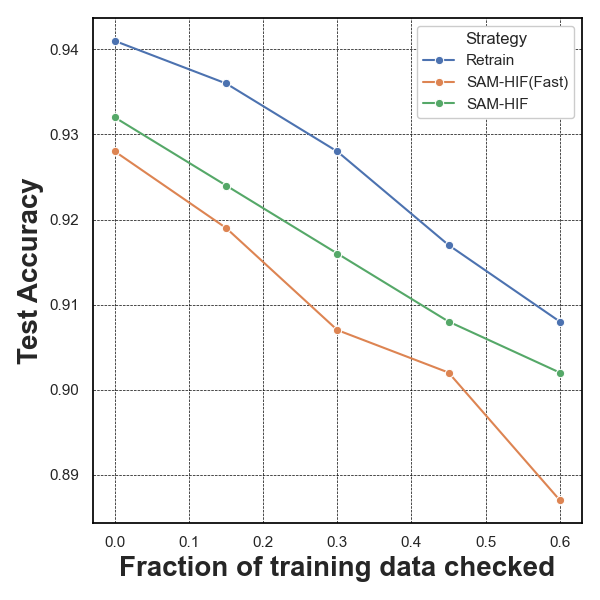}
        \caption{CIFAR-10}
        \label{fig:hdr:abl}
    \end{subfigure}%
    \begin{subfigure}{0.37\linewidth}
        \centering
        \includegraphics[width=\linewidth]{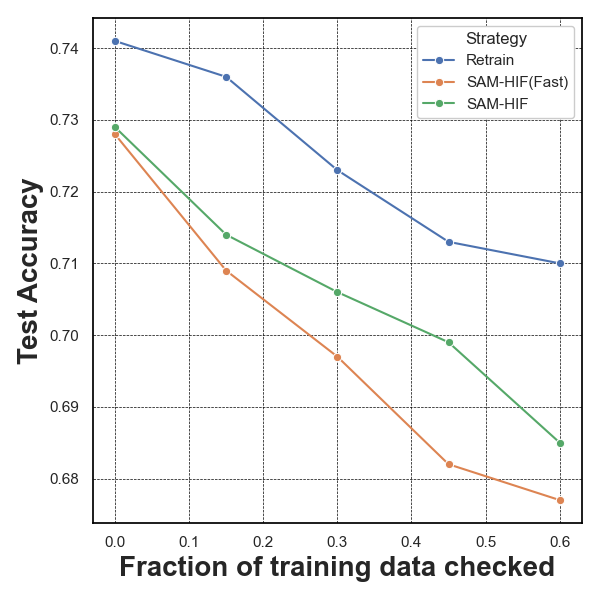}
        \caption{CIFAR-100}
        \label{fig:accuracy:abl}
    \end{subfigure}
    \caption{Ablation studies on CIFAR-10 and CIFAR-100.}
    \label{fig:4}
\vspace{-15pt}
\end{figure}

\section{Conclusion}
In conclusion, we addressed data attribution challenges in the SAM framework with two novel methods: SAM-HIF and SAM-GIF. SAM-HIF employs a comprehensive closed-form data influence estimation. SAM-GIF utilizes gradient trajectory information for efficient, scalable evaluation. Experiments on four datasets demonstrated the effectiveness and scalability of these methods. Our approach aids in mislabeled data detection, model editing, and interpretability in SAM-trained models.

\bibliographystyle{plain}
\bibliography{main}

\clearpage
\newpage
\appendix

\section{Omitted Proofs}

\subsection{Evaluating Data Attribution in SAM via Hessian-based IF}

\begin{theorem}\label{app:HIF-easy}
    Consider the data $(x_k, y_k)$ along with a SAM-trained model $\omega^*$, $\hat{\epsilon}(\omega^*)$ represents the model weight perturbation that results in the largest loss.
Then the corresponding SAM-IF is defined as:
\begin{align*}
    \text{SAM-IF}(x_k, y_k) &= -H_{\omega}^{-1} \cdot\nabla L_S^k(\omega^*+ \hat{\epsilon}(\omega^*))
\end{align*}
where $H_{\omega}$ is defined as $H_{\omega} = \nabla^2 L_S\left(\omega^*+\hat{\epsilon}(\omega^*)\right)+ \lambda\cdot I$. $\omega_{k}$ can be approximated by 
\begin{equation*}
    \omega_{k}  \approx  \omega^*-\text{SAM-IF}(x_k, y_k).
\end{equation*}
\end{theorem}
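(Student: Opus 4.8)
The plan is to mimic the classical influence-function derivation, but to use Lemma~\ref{danskin} (Danskin's theorem) at the crucial step so that the perturbation-induced terms collapse. First I would write down the first-order optimality condition for the up-weighted SAM problem defining $\omega_{k,\delta}$. Differentiating $L_{\text{Total},\delta}(\omega) = L_S(\omega + \hat{\epsilon}_\delta(\omega)) + \delta\cdot L_S^k(\omega+\hat{\epsilon}_\delta(\omega)) + \tfrac{\lambda}{2}\|\omega\|_2^2$ and invoking the same argument as in Lemma~\ref{danskin} (the Danskin/envelope identity kills the $\tfrac{\mathrm{d}\hat\epsilon}{\mathrm{d}\omega}$ factor because $\hat\epsilon_\delta(\omega)$ is itself a maximizer of the inner objective), the stationarity condition simplifies to
\begin{equation*}
\nabla L_S\bigl(\omega_{k,\delta} + \hat{\epsilon}_\delta(\omega_{k,\delta})\bigr) + \delta\cdot \nabla L_S^k\bigl(\omega_{k,\delta} + \hat{\epsilon}_\delta(\omega_{k,\delta})\bigr) + \lambda\,\omega_{k,\delta} = 0.
\end{equation*}

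Next I would perform a first-order Taylor expansion of this identity around $\delta = 0$, i.e. around $\omega^*$ where $\nabla L_S(\omega^* + \hat\epsilon(\omega^*)) + \lambda\omega^* = 0$ by Lemma~\ref{danskin}. Here is where the ``simplified'' assumption of Theorem~\ref{HIF-easy} enters: we treat $\hat\epsilon_\delta(\omega)$ as \emph{fixed} at $\hat\epsilon(\omega^*)$, i.e. we neglect both $\partial/\partial\delta$ and $\partial/\partial\omega$ variation of the inner perturbation. Under this simplification the argument of every gradient is just $\omega + \hat\epsilon(\omega^*)$, so differentiating the stationarity condition in $\delta$ at $\delta=0$ gives
\begin{equation*}
\Bigl(\nabla^2 L_S(\omega^* + \hat\epsilon(\omega^*)) + \lambda I\Bigr)\cdot \left.\frac{\mathrm{d}\omega_{k,\delta}}{\mathrm{d}\delta}\right|_{\delta=0} + \nabla L_S^k(\omega^* + \hat\epsilon(\omega^*)) = 0,
\end{equation*}
which rearranges to $\left.\tfrac{\mathrm{d}\omega_{k,\delta}}{\mathrm{d}\delta}\right|_{\delta=0} = -H_\omega^{-1}\nabla L_S^k(\omega^* + \hat\epsilon(\omega^*)) =: -\,\text{SAM-IF}(x_k,y_k)$, with $H_\omega = \nabla^2 L_S(\omega^*+\hat\epsilon(\omega^*)) + \lambda I$. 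Finally, setting $\delta = -1$ (full removal of $(x_k,y_k)$) and using the linear approximation $\omega_k = \omega_{k,-1} \approx \omega^* - \left.\tfrac{\mathrm{d}\omega_{k,\delta}}{\mathrm{d}\delta}\right|_{\delta=0}\cdot(-1)\cdot(-1)$... more carefully, $\omega_{k,-1} \approx \omega^* + (-1)\cdot\left.\tfrac{\mathrm{d}\omega_{k,\delta}}{\mathrm{d}\delta}\right|_{\delta=0} = \omega^* - \text{SAM-IF}(x_k,y_k)$ after substituting the derivative, which is the claimed formula.

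The main obstacle — and the place where the argument is genuinely delicate rather than routine — is justifying the use of Danskin's theorem to eliminate the $\tfrac{\mathrm{d}\hat\epsilon}{\mathrm{d}\omega}$ term in the \emph{up-weighted} problem, exactly as Lemma~\ref{danskin} does for the unperturbed one. This requires that $\hat\epsilon_\delta(\omega)$ be a genuine argmax of a smooth inner objective on the compact set $\{\|\epsilon\|_p\le\rho\}$ (so the envelope theorem applies and the chain-rule cross term $\tfrac{\mathrm{d}\hat\epsilon_\delta}{\mathrm{d}\omega}\cdot\nabla L_S(\omega + \hat\epsilon_\delta)$ vanishes at the inner optimum), together with the usual influence-function regularity: $L_S$ twice differentiable, $H_\omega$ invertible (guaranteed by $\lambda > 0$ if $L_S$ is convex, or assumed positive-definite at the minimizer otherwise), and the standard hand-waving that the Taylor remainder is $o(\delta)$ and that the linear extrapolation to $\delta=-1$ is acceptable. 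I would state these as the operating assumptions inherited from the classical IF setup and from Lemma~\ref{danskin}, and note that the key simplification distinguishing this theorem from Theorem~\ref{HIF-complex} is precisely the neglect of $\tfrac{\mathrm{d}\hat\epsilon_{k,\delta}}{\mathrm{d}\omega}$, whose reinstatement produces the extra $H_\omega\cdot\tfrac{\mathrm{d}\hat\epsilon(\omega^*)}{\mathrm{d}\omega}$ factor there.
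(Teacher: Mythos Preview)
Your proposal is correct and matches the paper's approach: apply Danskin's theorem to collapse the stationarity condition for the up-weighted problem, Taylor-expand around $\omega^*$, drop the $\hat\epsilon_\delta(\omega_\delta)-\hat\epsilon(\omega^*)$ contribution (the simplification that distinguishes this theorem from Theorem~\ref{HIF-complex}), and read off the influence function. One cosmetic slip to fix: since $\text{SAM-IF}(x_k,y_k)=-H_\omega^{-1}\nabla L_S^k(\omega^*+\hat\epsilon(\omega^*))$ by definition, your computed derivative $\left.\tfrac{\mathrm{d}\omega_{k,\delta}}{\mathrm{d}\delta}\right|_{\delta=0}=-H_\omega^{-1}\nabla L_S^k$ equals $+\text{SAM-IF}$, not $-\text{SAM-IF}$; with this correction the linearization $\omega_k\approx\omega^*+(-1)\cdot\text{SAM-IF}=\omega^*-\text{SAM-IF}$ goes through cleanly.
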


\begin{proof}
Recalling the definition $\hat{\epsilon}(\omega) = \argmax_{||\epsilon||_p\leq\rho} L_S(\omega+\epsilon)$ and
\begin{equation}\label{app:optimal}
    \begin{split}
        \omega^* &= \argmin_{\omega} L_S(\omega) + 
        \frac{\lambda}{2} \cdot ||\omega||^2_2\\
        &=\argmin_{\omega} L_S(\omega +  \hat{\epsilon}(\omega)) + \frac{\lambda}{2}\cdot ||\omega||^2_2\\
        &= \argmin_{\omega} \sum_{i=1}^n\ell(x_i,y_i;\omega+ \hat{\epsilon}(\omega)) + \frac{\lambda}{2}\cdot ||\omega||^2_2\\
        &= \argmin_{\omega} \sum_{i=1}^n L_S^i(\omega +  \hat{\epsilon}(\omega))  + \frac{\lambda}{2}\cdot ||\omega||^2_2
    \end{split}
\end{equation}    
Now we consider removing the $k$-th data $(x_k, y_k)$. The retrained parameter after the removal is denoted as $\omega_k$:
\begin{equation*}
    \omega_k = \argmin_{\omega} \sum_{i=1, i\neq k}^n L_S^i(\omega +  \hat{\epsilon}(\omega))  + \frac{\lambda}{2}\cdot ||\omega||^2_2
\end{equation*}
To estimate this $\omega_k$, we can firstly up-weigh the loss term $L_S^k(\omega +  \hat{\epsilon}(\omega))$ by $\delta$ in the original loss function defined in (\ref{app:optimal}), then a series of optimization problem can be defined as:
    \begin{align}\label{app:optimal_delta}
        \omega_{\delta} = \argmin_{\omega}L_S(\omega +  \hat{\epsilon}_{\delta}(\omega)) + \delta \cdot L_S^k(\omega+ \hat{\epsilon}_{\delta}(\omega)) + \frac{\lambda}{2}\cdot ||\omega||^2_2
    \end{align}
Noting here $\hat{\epsilon}_{\delta}(\omega)$ will change according to $\delta$ and is defined as
    \begin{equation*}
        \hat{\epsilon}_{\delta}(\omega) = \argmax_{||\epsilon||_p\leq\rho}  \left[ L_S(\omega+ {\epsilon})+  \delta \cdot L_S^k(\omega+ {\epsilon}) \right]
    \end{equation*}
From the minimizing condition in Equation (\ref{app:optimal}) and (\ref{app:optimal_delta}) and Lemma \ref{danskin}, we have
    \begin{equation}\label{app:optimal_condition}
        \begin{split}
\nabla L_S(\omega^*+\hat{\epsilon}(\omega^*)) = 0,\indent\nabla L_S(\omega_{\delta}+\hat{\epsilon}_{\delta}(\omega_{\delta})) +  \delta\cdot\nabla L^k_S(\omega_{\delta}+\hat{\epsilon}_{\delta}(\omega_{\delta}))= 0.
        \end{split}
    \end{equation}
To estimate $\omega_{\delta}- \omega^*$, we perform a Taylor expand at $\omega^*$ for the first equation in (\ref{app:optimal_condition}).:
    \begin{equation}\label{app:taylor_expand}
        \begin{split}
            0  &=   \nabla L_S\left(\omega^*+\hat{\epsilon}(\omega^*)\right) + \delta \cdot \nabla L^k_S\left(\omega^*+\hat{\epsilon}(\omega^*)\right) + \nabla^2 L_S\left(\omega^*+\hat{\epsilon}(\omega^*)\right)\cdot \left(\omega_{\delta} - \omega^*\right)\\
       & +\nabla^2 L_S\left(\omega^*+\hat{\epsilon}(\omega^*)\right) \cdot \left(\hat{{\epsilon}}_{\delta}\left(\omega_{\delta}\right) - \hat{\epsilon}(\omega^*)\right) 
        \end{split}
    \end{equation}
    In the above expansion, the first term equals $0$ from (\ref{app:optimal_condition}). And to make thing easier, we can neglect the last term and deduce a simple influence function for SAM.
    \begin{align*}
        \text{SAM-IF}(x_k, y_k) &= \left.\frac{\mathrm{d}\omega_{\delta}}{\mathrm{d}\delta}\right|_{\delta = 0}= -H_{\omega}^{-1} \cdot\nabla L_S^k(\omega^*+ \hat{\epsilon}(\omega^*)),
    \end{align*}
    where $H_{\omega}$ is defined as $H_{\omega} = \nabla^2 L_S\left(\omega^*+\hat{\epsilon}(\omega^*)\right)+ \lambda\cdot I$. 
    
When $\delta = -1$, $\omega_{\delta}$ becomes $\omega_{k}$. Then $\omega_{k}$ can be approximated by a first-order Taylor expansion:
\begin{equation*}
    \omega_{k}  \approx  \omega^*-\text{SAM-IF}(x_k, y_k).
\end{equation*}
\end{proof}

\begin{lemma}\label{lemma2}
    The term 
   $\hat{{\epsilon}}_{k, \delta}\left(\omega_{k, \delta}\right) - \hat{\epsilon}(\omega^*)$ can be approximated by
   \begin{equation}
   \begin{split}
       &\hat{\epsilon}_{k,\delta}(\omega_{k,\delta}) - \hat{\epsilon}(\omega^*)\\
       =& \hat{\epsilon}_{k, \delta}(\omega^*) - \hat{\epsilon}(\omega^*)  + \left. \frac{\mathrm{d} \hat{\epsilon}_{k, \delta}(\omega)}{\mathrm{d} \omega} \right|_{\omega = \omega^*} \cdot (\omega_{k,\delta} - \omega^*).
   \end{split}
   \end{equation}
   \end{lemma}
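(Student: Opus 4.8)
The statement is essentially a two-variable first-order Taylor expansion of the map $(\omega,\delta)\mapsto \hat{\epsilon}_{k,\delta}(\omega)$, evaluated by following a path from the base point $(\omega^*,0)$ to the perturbed point $(\omega_{k,\delta},\delta)$. The plan is to decompose the total increment $\hat{\epsilon}_{k,\delta}(\omega_{k,\delta})-\hat{\epsilon}(\omega^*)$ by inserting the intermediate point $\hat{\epsilon}_{k,\delta}(\omega^*)$, writing
\begin{equation*}
\hat{\epsilon}_{k,\delta}(\omega_{k,\delta})-\hat{\epsilon}(\omega^*) = \bigl[\hat{\epsilon}_{k,\delta}(\omega^*)-\hat{\epsilon}(\omega^*)\bigr] + \bigl[\hat{\epsilon}_{k,\delta}(\omega_{k,\delta})-\hat{\epsilon}_{k,\delta}(\omega^*)\bigr].
\end{equation*}
The first bracket is left as is (it is the ``change in perturbation at fixed $\omega^*$ due to reweighting,'' which vanishes as $\rho\to 0$ or $\delta\to 0$ as noted after the lemma). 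For the second bracket, I would perform a first-order Taylor expansion of the smooth function $\omega\mapsto\hat{\epsilon}_{k,\delta}(\omega)$ about $\omega=\omega^*$, giving $\frac{\mathrm{d}\hat{\epsilon}_{k,\delta}(\omega)}{\mathrm{d}\omega}\big|_{\omega=\omega^*}\cdot(\omega_{k,\delta}-\omega^*)$ plus higher-order terms in $\|\omega_{k,\delta}-\omega^*\|$, which are dropped in the approximation.

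The key steps in order: (i) recall from the definition of $\hat{\epsilon}_{k,\delta}$ that it is the argmax of a smooth (reweighted) loss over the $\ell_p$-ball of radius $\rho$, and argue that on the relevant regime it depends differentiably on $\omega$ (via the approximation formula \eqref{eq:6}, or via an implicit-function argument on the first-order optimality condition when the constraint is active) so that the Jacobian $\frac{\mathrm{d}\hat{\epsilon}_{k,\delta}(\omega)}{\mathrm{d}\omega}$ exists; (ii) insert the intermediate point $\hat{\epsilon}_{k,\delta}(\omega^*)$ as above; (iii) Taylor-expand the $\omega$-dependence of the second bracket to first order about $\omega^*$; (iv) collect terms to obtain exactly the claimed identity, understanding ``$=$'' here in the first-order (approximation) sense used throughout the paper's influence-function derivations.

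The main obstacle is the differentiability / regularity of $\hat{\epsilon}_{k,\delta}(\omega)$ in $\omega$: because $\hat{\epsilon}$ is defined as a constrained maximizer over the ball $\|\epsilon\|_p\le\rho$, its dependence on $\omega$ is only guaranteed smooth under assumptions (e.g., the maximizer is unique, lies on the boundary sphere, and the relevant second-order conditions hold — or one simply adopts the closed-form surrogate \eqref{eq:6}, for which the Jacobian is explicit modulo the sign-stability assumption already invoked in the paper). I would handle this by explicitly adopting the approximation \eqref{eq:6} for $\hat{\epsilon}$, under which every quantity in the statement is a genuinely differentiable function of $\omega$, so the expansion in step (iii) is just an ordinary first-order Taylor expansion; the rest is then a routine rearrangement. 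A secondary, minor point is bookkeeping the order of the neglected remainder: it is $O(\|\omega_{k,\delta}-\omega^*\|^2)$, which is consistent with the linearization already used to define $\omega_{k,\delta}-\omega^*$ itself in Theorem~\ref{app:HIF-easy}, so dropping it does not degrade the overall order of approximation.
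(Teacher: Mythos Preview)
Your proposal is correct and follows essentially the same route as the paper: insert the intermediate point $\hat{\epsilon}_{k,\delta}(\omega^*)$, leave the $\delta$-variation bracket $\hat{\epsilon}_{k,\delta}(\omega^*)-\hat{\epsilon}(\omega^*)$ untouched, and Taylor-expand $\omega\mapsto\hat{\epsilon}_{k,\delta}(\omega)$ about $\omega^*$ for the remaining bracket. Your added discussion of the regularity of $\hat{\epsilon}_{k,\delta}$ (via the closed-form surrogate in equation~\eqref{eq:6}) is more careful than the paper's own proof, which simply assumes the expansion is valid.
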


\begin{proof}    
Now, we begin to estimate $\hat{\epsilon}_{k,\delta}(\omega_{k,\delta}) - \hat{\epsilon}(\omega^*)$. Recalling the definitions as following:
  \begin{align*}
    \hat{{\epsilon}}_{k, \delta}\left(\omega_{k, \delta}\right)  &=\argmax_{||\epsilon||_p\leq\rho} \left[ L_{S}(\omega_{k, \delta}+ {\epsilon}) + \delta\cdot L_S^k(\omega_{k, \delta}+ {\epsilon}) \right],\\
          \hat{\epsilon}(\omega^*) &= \argmax_{||\epsilon||_p\leq\rho}.
          L_S(\omega^* + {\epsilon})
  \end{align*}
  We will use $\hat{\epsilon}_{k, \delta}(\omega^*)$ as an intermediate variable to simplify the approximation, which is defined as
  \begin{equation*}
    \hat{\epsilon}_{k, \delta}(\omega^*) = \argmin_{||\epsilon||_p\leq\rho}  \left[ L_{S}(\omega^*+ {\epsilon}) + \delta\cdot L_S^k(\omega^*+ {\epsilon}) \right]
  \end{equation*}
  Firstly, we perform a Taylor expansion for $\hat{{\epsilon}}_{k, \delta}\left(\omega_{k, \delta}\right)$ at $\omega^*$ as follows:
  \begin{equation*}
    \hat{{\epsilon}}_{k, \delta}\left(\omega_{k, \delta}\right) \approx \hat{{\epsilon}}_{k, \delta}\left(\omega^*\right)   + \left. \frac{\mathrm{d} \hat{{\epsilon}}_{k, \delta}\left(\omega_{k, \delta}\right)}{\mathrm{d} \omega} \right|_{\omega = \omega^*} \cdot(\omega_{k,\delta} - \omega^*)
  \end{equation*}
  Then the objective to estimate becomes
\begin{align*}\label{lemma2}
    &\hat{\epsilon}_{k,\delta}(\omega_{k,\delta}) - \hat{\epsilon}(\omega^*)\\
=&  \hat{\epsilon}_{k,\delta}(\omega_{k,\delta}) - \hat{\epsilon}_{k,\delta}(\omega^*) + \hat{\epsilon}_{k,\delta}(\omega^*)  - \hat{\epsilon}(\omega^*)\\
=& \hat{\epsilon}_{k,\delta}(\omega^*)  - \hat{\epsilon}(\omega^*) + \left. \frac{\mathrm{d} \hat{{\epsilon}}_{k, \delta}\left(\omega_{k, \delta}\right)}{\mathrm{d} \omega} \right|_{\omega = \omega^*} \cdot(\omega_{k,\delta} - \omega^*)
\end{align*}
\end{proof}

\begin{theorem}\label{app:HIF-complex}
Consider the $k$-th data $(x_k, y_k)$ along with a SAM-trained model $\omega^*$. Define the Hessian-based influence function for SAM(SAM-HIF) as:
\begin{align*}
    &\text{SAM-HIF}(x_k, y_k)\\
    =& -\left(H_{\omega} +H_{\omega}\cdot  \frac{\mathrm{d} \hat{\epsilon}(\omega^*)}{\mathrm{d} \omega }\right)^{-1} \cdot\nabla L^k_S\left(\omega^*+\hat{\epsilon}(\omega^*)\right).
\end{align*}
Then $ \omega_{k}$ can be approximated by:
\begin{equation*}
    \omega_{k} \approx \omega^* -\text{SAM-HIF}(x_k, y_k).
\end{equation*}
\end{theorem}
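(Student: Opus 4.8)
The plan is to refine the simplified argument of Theorem~\ref{app:HIF-easy} by retaining the perturbation-related term that was dropped in~(\ref{app:taylor_expand}), and then to use Lemma~\ref{lemma2} to express that term in terms of $\omega_{k,\delta}-\omega^*$ so that everything can be collected into a single linear equation. Concretely, I would start again from the optimality conditions in~(\ref{app:optimal_condition}): $\nabla L_S(\omega^*+\hat{\epsilon}(\omega^*))=0$ and $\nabla L_S(\omega_{\delta}+\hat{\epsilon}_{\delta}(\omega_{\delta}))+\delta\cdot\nabla L_S^k(\omega_{\delta}+\hat{\epsilon}_{\delta}(\omega_{\delta}))=0$ (together with the regularization term $\lambda\omega$ coming from the total loss). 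Taylor-expanding the second condition around $\omega^*+\hat{\epsilon}(\omega^*)$ gives, to first order,
\begin{equation*}
0 = \delta\cdot\nabla L_S^k(\omega^*+\hat{\epsilon}(\omega^*)) + \nabla^2 L_S(\omega^*+\hat{\epsilon}(\omega^*))\cdot\bigl[(\omega_{k,\delta}-\omega^*) + (\hat{\epsilon}_{k,\delta}(\omega_{k,\delta})-\hat{\epsilon}(\omega^*))\bigr] + \lambda\cdot(\omega_{k,\delta}-\omega^*),
\end{equation*}
which is exactly~(\ref{app:taylor_expand}) with the last term kept.

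Next I would substitute the approximation from Lemma~\ref{lemma2}, namely $\hat{\epsilon}_{k,\delta}(\omega_{k,\delta})-\hat{\epsilon}(\omega^*) = \bigl(\hat{\epsilon}_{k,\delta}(\omega^*)-\hat{\epsilon}(\omega^*)\bigr) + \frac{\mathrm{d}\hat{\epsilon}_{k,\delta}(\omega)}{\mathrm{d}\omega}\big|_{\omega=\omega^*}\cdot(\omega_{k,\delta}-\omega^*)$. The plan here is to argue that as $\rho\to 0$ (equivalently, in the regime where perturbations are small) the term $\hat{\epsilon}_{k,\delta}(\omega^*)-\hat{\epsilon}(\omega^*)$ is of higher order and can be absorbed, as noted in the remark following the lemma, and that $\frac{\mathrm{d}\hat{\epsilon}_{k,\delta}(\omega)}{\mathrm{d}\omega}\big|_{\omega=\omega^*}\approx\frac{\mathrm{d}\hat{\epsilon}(\omega^*)}{\mathrm{d}\omega}$ to leading order in $\delta$ (replacing the $\delta$-perturbed perturbation map by the unperturbed one, consistent with evaluating a first derivative at $\delta=0$). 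After these substitutions the bracket becomes $\bigl(I + \frac{\mathrm{d}\hat{\epsilon}(\omega^*)}{\mathrm{d}\omega}\bigr)\cdot(\omega_{k,\delta}-\omega^*)$, so the expansion reads
\begin{equation*}
0 = \delta\cdot\nabla L_S^k(\omega^*+\hat{\epsilon}(\omega^*)) + \Bigl(\nabla^2 L_S(\omega^*+\hat{\epsilon}(\omega^*)) + \lambda I\Bigr)\cdot(\omega_{k,\delta}-\omega^*) + \nabla^2 L_S(\omega^*+\hat{\epsilon}(\omega^*))\cdot\frac{\mathrm{d}\hat{\epsilon}(\omega^*)}{\mathrm{d}\omega}\cdot(\omega_{k,\delta}-\omega^*).
\end{equation*}
Recognizing $H_\omega = \nabla^2 L_S(\omega^*+\hat{\epsilon}(\omega^*)) + \lambda I$, and noting that the $\lambda I$ piece can be folded into the first copy of $H_\omega$ while the remaining Hessian multiplying $\frac{\mathrm{d}\hat{\epsilon}(\omega^*)}{\mathrm{d}\omega}$ is $H_\omega - \lambda I \approx H_\omega$ (or, matching the statement exactly, one writes the coefficient as $H_\omega + H_\omega\cdot\frac{\mathrm{d}\hat{\epsilon}(\omega^*)}{\mathrm{d}\omega}$ under the convention used there), one solves for the derivative:
\begin{equation*}
\left.\frac{\mathrm{d}\omega_{k,\delta}}{\mathrm{d}\delta}\right|_{\delta=0} = -\Bigl(H_\omega + H_\omega\cdot\frac{\mathrm{d}\hat{\epsilon}(\omega^*)}{\mathrm{d}\omega}\Bigr)^{-1}\cdot\nabla L_S^k(\omega^*+\hat{\epsilon}(\omega^*)) \triangleq -\,\text{SAM-HIF}(x_k,y_k).
\end{equation*}
Finally, setting $\delta=-1$ and taking the first-order Taylor step $\omega_k = \omega_{k,-1}\approx\omega^* + (-1)\cdot\left.\frac{\mathrm{d}\omega_{k,\delta}}{\mathrm{d}\delta}\right|_{\delta=0} = \omega^* - \text{SAM-HIF}(x_k,y_k)$, which is the claimed approximation.

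The main obstacle I anticipate is making the treatment of the perturbation-derivative term rigorous rather than merely formal: one must justify (i) that $\hat{\epsilon}_{k,\delta}(\omega^*)-\hat{\epsilon}(\omega^*)$ is negligible compared to the terms kept — this relies on the $\rho\to 0$ / small-perturbation remark and on the $\argmax$ being differentiable in $\delta$, which in turn needs the inner maximization to have a well-behaved (e.g. unique, interior or smoothly-varying-on-the-sphere) solution; (ii) that replacing $\frac{\mathrm{d}\hat{\epsilon}_{k,\delta}}{\mathrm{d}\omega}$ by $\frac{\mathrm{d}\hat{\epsilon}}{\mathrm{d}\omega}$ at $\delta=0$ is legitimate to the order being tracked; and (iii) invertibility of $H_\omega + H_\omega\frac{\mathrm{d}\hat{\epsilon}(\omega^*)}{\mathrm{d}\omega} = H_\omega\bigl(I + \frac{\mathrm{d}\hat{\epsilon}(\omega^*)}{\mathrm{d}\omega}\bigr)$, which requires both $H_\omega\succ 0$ (guaranteed by $\lambda>0$ if $\nabla^2 L_S\succeq 0$) and that $I+\frac{\mathrm{d}\hat{\epsilon}(\omega^*)}{\mathrm{d}\omega}$ is nonsingular. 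In practice the paper sidesteps (iii) by using the closed-form approximation~(\ref{eq:6}) for $\hat{\epsilon}$ together with the sign-stability assumption, which makes $\frac{\mathrm{d}\hat{\epsilon}(\omega^*)}{\mathrm{d}\omega}$ explicit and small, so I would invoke that same device to close the argument.
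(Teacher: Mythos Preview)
Your proposal is correct and follows essentially the same route as the paper: start from the Taylor expansion~(\ref{app:taylor_expand}), substitute Lemma~\ref{lemma2}, collect the $(\omega_{k,\delta}-\omega^*)$ terms into $H_\omega\bigl(I+\tfrac{\mathrm{d}\hat{\epsilon}}{\mathrm{d}\omega}\bigr)$, differentiate at $\delta=0$, and set $\delta=-1$. The only cosmetic difference is that the paper carries the residual $\hat{\epsilon}_{k,\delta}(\omega^*)-\hat{\epsilon}(\omega^*)$ through to the derivative (obtaining an extra $H_\omega\cdot\tfrac{\mathrm{d}\hat{\epsilon}_{\delta}(\omega^*)}{\mathrm{d}\delta}\big|_{\delta=0}$ term) and then drops it ``to enhance computation efficiency'', whereas you discard it a step earlier via the small-perturbation remark; the resulting formula is identical, and your observation about the $\lambda I$ versus $H_\omega$ mismatch in the second factor is a point the paper simply glosses over.
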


   \begin{proof}
Based on the expansion in Equation (\ref{app:taylor_expand}) in Theorem \ref{app:HIF-easy}, we can further estimate the term $\hat{{\epsilon}}_{k, \delta}\left(\omega_{k, \delta}\right) - \hat{\epsilon}(\omega^*)$ by Lemma \ref{app:epsilon_approx}:
\begin{equation}
    \begin{split}
        0  &=   \nabla L_S\left(\omega^*+\hat{\epsilon}(\omega^*)\right) + \delta \cdot \nabla L^k_S\left(\omega^*+\hat{\epsilon}(\omega^*)\right) + \nabla^2 L_S\left(\omega^*+\hat{\epsilon}(\omega^*)\right)\cdot \left(\omega_{\delta} - \omega^*\right)\\
   &+ \nabla^2 L_S\left(\omega^*+\hat{\epsilon}(\omega^*)\right)\cdot \left(\hat{\epsilon}_{\delta}(\omega^*)  - \hat{\epsilon}(\omega^*) + \left. \frac{\mathrm{d} \hat{{\epsilon}}_{\delta}\left(\omega_{\delta}\right)}{\mathrm{d} \omega} \right|_{\omega = \omega^*} \cdot(\omega_{\delta} - \omega^*) \right)
    \end{split}
\end{equation}
The first term equals $0$ from (\ref{app:optimal_condition}). Then we have
\begin{equation}
    \begin{split}
        0  &=  \delta \cdot \nabla L^k_S\left(\omega^*+\hat{\epsilon}(\omega^*)\right) + \left(H_{\omega}+ H_{\omega} \cdot\frac{\mathrm{d} \hat{{\epsilon}}_{\delta}\left(\omega_{\delta}\right)}{\mathrm{d} \omega}\right)\cdot \left(\omega_{\delta} - \omega^*\right)+ H_{\omega}\cdot \left(\hat{\epsilon}_{\delta}(\omega^*)  - \hat{\epsilon}(\omega^*) \right).
    \end{split}
\end{equation}
Then 
\begin{align*}
\omega_{\delta} - \omega^*  = &- \left(H_{\omega}+ H_{\omega} \cdot\frac{\mathrm{d} \hat{{\epsilon}}_{\delta}\left(\omega_{\delta}\right)}{\mathrm{d} \omega}\right)^{-1}\cdot H_{\omega}\cdot \left(\hat{\epsilon}_{\delta}(\omega^*)  - \hat{\epsilon}(\omega^*) \right)  \\
&- \delta \cdot \left(H_{\omega}+ H_{\omega} \cdot\frac{\mathrm{d}  \hat{{\epsilon}}_{\delta}\left(\omega_{\delta}\right)}{\mathrm{d} \omega}\right)^{-1}\cdot \nabla L^k_S\left(\omega^*+\hat{\epsilon}(\omega^*)\right).
\end{align*}
Then we can obtain the following equation:
\begin{align*}
    \left.\frac{\mathrm{d} \omega_{\delta}}{\mathrm{d}\delta}\right|_{\delta = 0}
      = & - \left(H_{\omega}+ H_{\omega} \cdot\frac{\mathrm{d} \hat{{\epsilon}}\left(\omega^*\right)}{\mathrm{d} \omega}\right)^{-1}\cdot H_{\omega}\cdot 
        \left.\frac{\mathrm{d}  \hat{\epsilon}_{\delta}(\omega^*)}{\mathrm{d}\delta}\right|_{\delta = 0} \\
        &-  \left(H_{\omega}+ H_{\omega} \cdot\frac{\mathrm{d} \hat{{\epsilon}}\left(\omega^*\right)}{\mathrm{d} \omega}\right)^{-1}\cdot \nabla L^k_S\left(\omega^*+\hat{\epsilon}(\omega^*)\right).
\end{align*}

To enhance the computation efficiency, we drop the first term, and obtain the SAM-HIF as
\begin{equation}
    \begin{split}
      \text{SAM-IF}=&  \left.\frac{\mathrm{d} \omega_{\delta}}{\mathrm{d}\delta}\right|_{\delta = 0}
      =-  \left(H_{\omega}+ H_{\omega} \cdot\frac{\mathrm{d} \hat{{\epsilon}}\left(\omega^*\right)}{\mathrm{d} \omega}\right)^{-1}\cdot \nabla L^k_S\left(\omega^*+\hat{\epsilon}(\omega^*)\right).
    \end{split}
\end{equation}
\end{proof}

\subsection{Computation Acceleration}
\begin{theorem}\label{app:IS}
    Given a validation dataset defined as $D_{val} = \{(x_t,y_t)\}_{t=1}^n$.
    Denote the SAM-retrained model after the removal of $(x_k,y_k)$ as $\omega^*_{-k}$, then 
    \begin{equation*}
    \begin{split}
           &\sum_{(x,y)\in D_{val}}\ell(x,y;\omega^*) - \sum_{(x,y)\in D_{val}}\ell(x,y;\omega^*_{-k}) \\
           \approx &\sum_{(x,y)\in D_{val}} \nabla \ell(x,y;\omega^*) \cdot \text{IF}(x,y)\triangleq \text{IS}(x,y), 
    \end{split}
    \end{equation*}
    where IF can be  SAM-HIF or SAM-GIF. We define the right hand of the above equation as the Influence Score (IS). 
    \end{theorem}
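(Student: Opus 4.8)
The plan is to compose two first-order approximations that have already been established: the leave-one-out parameter approximation identifying $\text{IF}(x_k,y_k)$ with the parameter shift, and a Taylor expansion of the validation objective about $\omega^*$. First I would invoke the appropriate leave-one-out result from the excerpt — Theorem~\ref{app:HIF-complex} when $\text{IF}$ is SAM-HIF, Theorem~\ref{app:HIF-easy} for the fast variant, or Theorem~\ref{GIF} when $\text{IF}$ is SAM-GIF — each of which yields $\omega^*_{-k} \approx \omega^* - \text{IF}(x_k,y_k)$, equivalently $\omega^*_{-k} - \omega^* \approx -\,\text{IF}(x_k,y_k)$.

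Next I would set $f(\omega) \triangleq \sum_{(x,y)\in D_{val}} \ell(x,y;\omega)$ and Taylor-expand $f$ at $\omega^*$, evaluated at $\omega^*_{-k}$:
\[
f(\omega^*_{-k}) = f(\omega^*) + \nabla f(\omega^*)\cdot(\omega^*_{-k}-\omega^*) + O\!\left(\norm{\omega^*_{-k}-\omega^*}^2\right),
\]
where $\nabla f(\omega^*) = \sum_{(x,y)\in D_{val}} \nabla\ell(x,y;\omega^*)$. Substituting the parameter-shift approximation from the previous step and discarding the second-order remainder gives
\[
f(\omega^*) - f(\omega^*_{-k}) \;\approx\; \nabla f(\omega^*)\cdot \text{IF}(x_k,y_k) \;=\; \sum_{(x,y)\in D_{val}} \nabla\ell(x,y;\omega^*)\cdot \text{IF}(x_k,y_k),
\]
which is exactly the claimed identity and the definition of $\text{IS}(x,y)$. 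This is the same device used in the Preliminaries to approximate the change in a differentiable evaluation function as $-\nabla f(\hat{\theta})\cdot\text{IF}(z_m)$, now applied with the SAM-specific influence functions in place of the classical one.

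The only delicate point, since the statement is an ``$\approx$'', is bookkeeping of the error: the total discrepancy is the sum of (i) the second-order Taylor remainder of $f$, controlled by the operator norm of $\nabla^2 f$ on a neighborhood of $\omega^*$ times $\norm{\omega^*_{-k}-\omega^*}^2$, and (ii) the error inherited from the influence-function estimate of $\omega^*-\omega^*_{-k}$ (itself a first-order $\delta=-1$ extrapolation, as noted after Theorem~\ref{app:HIF-easy}), propagated linearly through $\nabla f(\omega^*)$. Under standard smoothness assumptions on $\ell$ (bounded Hessian, Lipschitz gradients) both contributions are $O(\norm{\omega^*-\omega^*_{-k}}^2)$, hence second order in $1/n$ for a single removed point and negligible against the leading term; I would present this as the justification for the ``$\approx$'' rather than chasing sharp constants. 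Beyond this error accounting there is no real obstacle — the result is a direct composition of two linearizations.
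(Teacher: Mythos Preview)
Your proposal is correct and follows exactly the paper's approach: a first-order Taylor expansion of the validation objective at $\omega^*$ combined with the substitution $\omega^*-\omega^*_{-k}\approx\text{IF}(x_k,y_k)$. Your error bookkeeping is actually more careful than what the paper records, but the underlying argument is the same.
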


\begin{proof}
    
\begin{equation*}
    \begin{split}
&\sum_{(x,y)\in D_{val}}\ell(x,y;\omega^*) - \sum_{(x,y)\in D_{val}}\ell(x,y;\omega^*_{-k}) \\
\approx &\sum_{(x,y)\in D_{val}} \nabla \ell(x,y;\omega^*) \cdot \left(\omega^* - \omega^*_{-k}\right)\\
=&\sum_{(x,y)\in D_{val}} \nabla \ell(x,y;\omega^*) \cdot \text{IF}(x,y)\triangleq \text{IS}(x,y).
    \end{split}
    \end{equation*}

\end{proof}

\section{Omitted Algorithms}

\begin{algorithm}
\caption{SAM-IF with gradient trajectory\label{alg:3}}
\begin{algorithmic}[1]
    \STATE {\bf Input:} \\
    {\bf data:} 
    Training Dataset $S = \{(x_i, y_i\}_{i=1}^n$, data point $({x}_k, {y}_k)$ to be evaluated.\\
    {\bf Parameter: } Parameter checkpoint set $\Omega = \{\omega_c\}_{c=1}^s$, learning rate $\eta_c$ at step $c$. \\
\STATE Compute the influence of $(x_k, y_k)$ in the $c$-th checkpoint as 
\begin{equation*}
    \text{IF}_c = \eta_c \cdot \nabla L_S^k(\omega_{c-1} + \hat{\epsilon}(\omega_{c-1}))
\end{equation*}
\STATE Sum up to obtain the final influence as
\begin{equation*}
    \text{SAM-IF}_{\text{Step}} = \sum_{c=1}^s\text{IF}_c
\end{equation*}
\STATE {\bf Return: }$\text{SAM-IF}_{\text{Step}}$. Final Parameter $\omega^- = \omega^* - \text{SAM-IF}$.
\end{algorithmic}
\end{algorithm}

\begin{algorithm}
\caption{Simple SAM-IF\label{alg:simple}}
\begin{algorithmic}[1]
    \STATE {\bf Input:} \\
    {\bf Data:} 
    Training Dataset $S = \{(x_i, y_i\}_{i=1}^n$, data point $({x}_k, {y}_k)$ to be evaluated.\\
    {\bf Parameter: } 
    The learned SAM-parameter $\omega^*$, the learned best perturbation $\hat{\epsilon}$.
 \STATE Compute $T^k$ as:
 \begin{equation*}
     T^k = \nabla L^k_S\left(\omega^*+ \hat{\epsilon}\right)
 \end{equation*}
 \STATE Define the Hessian matrix $H$ as:
 \begin{equation*}
    \nabla^2 L_S\left(\omega^*+\hat{\epsilon}\right)
 \end{equation*}
\STATE Use EK-FAC to compute the Hessian-vector product of $H$ and $T^k$:
\begin{equation*}
    \text{SAM-IF} = -H^{-1}\cdot T^k
\end{equation*}
\STATE Obtain the estimated parameter by $$\omega^- = \omega^* -\text{SAM-IF}.$$
\STATE {\bf Return: }$\text{SAM-IF}$, $\omega^-$.
\end{algorithmic}
\end{algorithm}

\begin{algorithm}
\caption{SAM-IF with total Hessian\label{alg:complete}}
\begin{algorithmic}[1]
    \STATE {\bf Input:} \\
    {\bf Data:} 
    Training Dataset $S = \{(x_i, y_i\}_{i=1}^n$, data point $({x}_k, {y}_k)$ to be evaluated.\\
    {\bf Parameter: } 
    The learned SAM-parameter $\omega^*$, the learned best perturbation $\hat{\epsilon}$.
 \STATE Compute $T^k$ as:
 \begin{equation*}
     T^k = \nabla L^k_S\left(\omega^* + \hat{\epsilon}\right)
 \end{equation*}
 \STATE Define the gradient of $\epsilon$ as
 $$C^{\epsilon} = \nabla_{\omega}\frac{|\nabla_w  L_{S}(\omega^*)  |^{q-1}}{\left( \|\nabla_w  L_{S}(\omega^*) \|_q^q \right)^{1/p}}$$
 \STATE Give the definition of $H$ as
\begin{align*}
 H =&    \nabla^2 L_S\left(\omega^*+\hat{\epsilon}\right) + \nabla^2 L_S\left(\omega^*+ \hat{\epsilon}\right) \cdot  C^{\epsilon}.
\end{align*}
 \STATE $j\xleftarrow{}1$\\
 $T^k\xleftarrow{}I_0$
\IF{$\|I_j - I_{j-1}\|_1 > \zeta$}
\STATE Use EK-FAC to compute the Hessian-vector product $S_j\triangleq H\cdot  I_j$.
\STATE Compute $I_{j+1}$ by
\begin{equation*}
I_{j+1}= I_{j} -  \delta\cdot I_{j} + S_j +  T^k
\end{equation*}
\STATE $j\xleftarrow{}j+1$
\ENDIF
\STATE $\text{SAM-IF} \leftarrow I_{j+1}$.
\STATE Obtain the estimated parameter by $\omega^- = \omega^* -\text{SAM-IF}.$
\STATE {\bf Return: }$\text{SAM-IF}$. $\omega^-$.
\end{algorithmic}
\end{algorithm}

\clearpage

\section{Additional Experiment Results}
This section presents the additional experimental results for completeness and detailed analysis.

\subsection{Results of Efficiency and Accuracy}\label{app:exp:efficiency}
We conduct experiments on MNIST and HAM with Wide-Resnet as the model.
\begin{table*}[ht]
\centering
\caption{Performance comparison on MNIST and HAM.}
\label{tab:results}
\resizebox{0.8\linewidth}{!}{
\begin{tabular}{lcccc}
\toprule
    \multirow{3}{*}{\textbf{Method}} & \multicolumn{2}{c}{\textbf{Mnist}} & \multicolumn{2}{c}{\textbf{HAM}}\\
    \cmidrule(r){2-3} \cmidrule(r){4-5} 
    & \textbf{Accuracy} & \textbf{RT (second)} & \textbf{Accuracy} & \textbf{RT (second)} \\
\midrule
    Retrain & 0.9927$\pm$0.0005 & 2304.24$\pm$7.91 & 0.7254$\pm$0.02 & 2300.00$\pm$10.00\\
    SAM-HIF(Fast)  & 0.9876$\pm$0.0013 &  8.0698$\pm$2.91 & 0.7130$\pm$0.0150 & 15.00$\pm$2.00\\
    SAM-HIF  & 0.9880$\pm$0.0011 & 18.2321$\pm$1.91 & 0.7212$\pm$0.0120 & 52.20$\pm$3.00\\
    SAM-GIF  & 0.9884$\pm$0.0007 & 2.8212$\pm$1.91 & 0.7350$\pm$0.0110 & 4.320$\pm$1.50 \\
\bottomrule
\end{tabular}}
\vspace{-7pt}
\end{table*}
\subsection{Results of Identifying Harmful Data}
We conduct additional harmful data identification experiments on the MNIST, CIFAR-100, and Mini-ImageNet datasets. The results are listed as follows. Figures \ref{app:id_h_cifa-100}, \ref{app:id_h_mini}, and \ref{app:id_h_minist} illustrate the effectiveness of the SAM-GIF algorithm in detecting and removing harmful data.
The superiority of our method is demonstrated across multiple datasets. We observe that, compared to random removal, the harmful detection rate of the SAM-GIF algorithm reaches approximately 80\% at a removal rate of 0.4. Moreover, by removing harmful data, the model's accuracy gradually improves.

\begin{figure}[htbp]
    \centering
    \begin{subfigure}{0.35\linewidth}
        \centering
        \includegraphics[width=\linewidth]{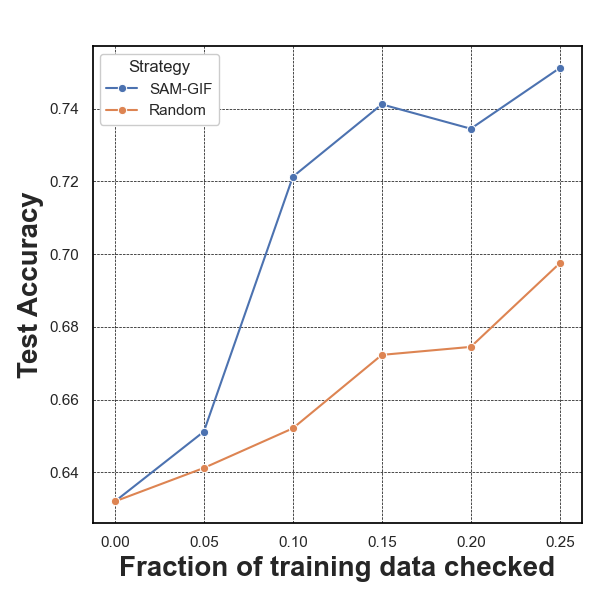}
        \caption{Test Accuracy}
        \label{fig:hdr1}
    \end{subfigure}%
    \begin{subfigure}{0.35\linewidth}
        \centering
        \includegraphics[width=\linewidth]{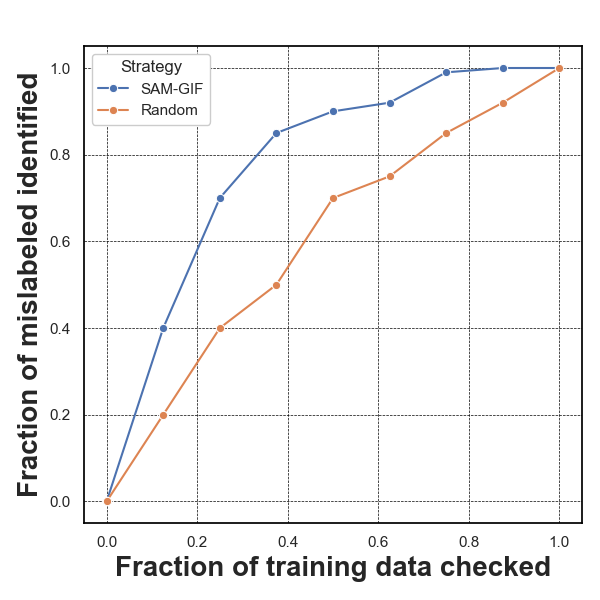}
        \caption{Harmful Data Removal}
        \label{fig:accuracy1}
    \end{subfigure}
    \caption{Harmful data removal experiment on CIFAR-100 dataset. IS: using the influence score to determine which sample to remove. Random: randomly removing tasks.}
    \label{app:id_h_cifa-100}
\vspace{-12pt}
\end{figure}

\begin{figure}[htbp]
    \centering
    \begin{subfigure}{0.35\linewidth}
        \centering
        \includegraphics[width=\linewidth]{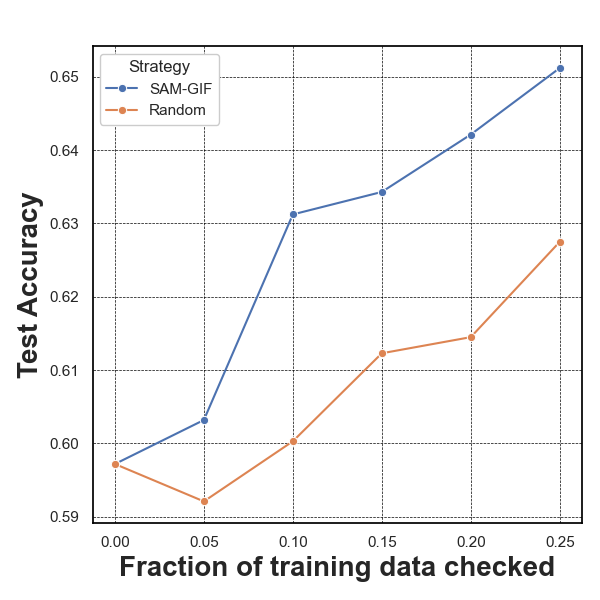}
        \caption{Test Accuracy}
        \label{fig:hdr2}
    \end{subfigure}%
    \begin{subfigure}{0.35\linewidth}
        \centering
        \includegraphics[width=\linewidth]{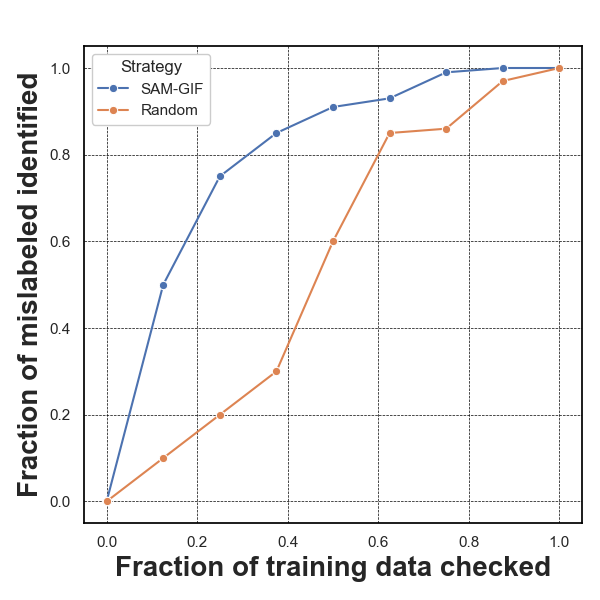}
        \caption{Harmful Data Removal}
        \label{fig:accuracy2}
    \end{subfigure}
    \caption{Harmful data removal experiment on Mini-ImageNet dataset. IS: using the influence score to determine which sample to remove. Random: randomly removing tasks.}
    \label{app:id_h_mini}
\vspace{-12pt}
\end{figure}

\begin{figure}[htbp]
    \centering
    \begin{subfigure}{0.35\linewidth}
        \centering
        \includegraphics[width=\linewidth]{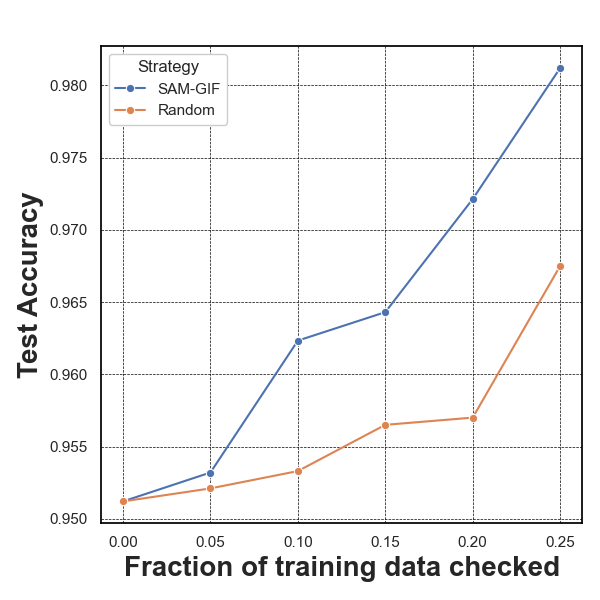}
        \caption{Test Accuracy}
        \label{fig:hdr3}
    \end{subfigure}%
    \begin{subfigure}{0.35\linewidth}
        \centering
        \includegraphics[width=\linewidth]{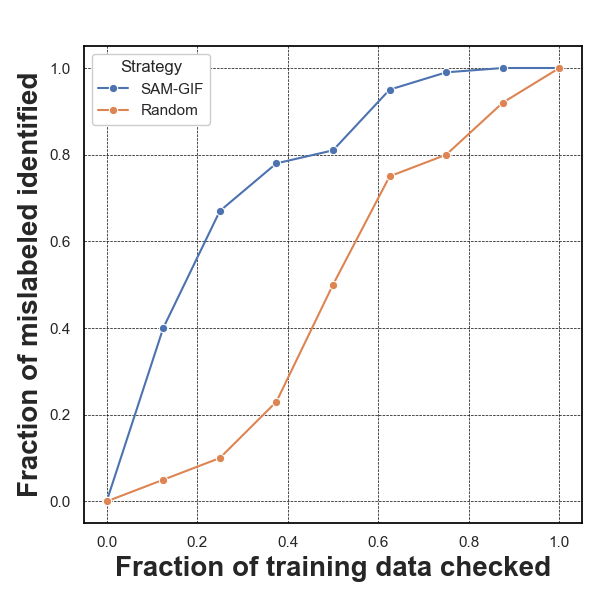}
        \caption{Harmful Data Removal}
        \label{fig:accuracy3}
    \end{subfigure}
    \caption{Harmful data removal experiment on MNIST dataset. IS: using the influence score to determine which sample to remove. Random: randomly removing tasks }
    \label{app:id_h_minist}
\vspace{-12pt}
\end{figure}

\subsection{Additional Results on Interpretability}
Figures~\ref{app:inter_minist}, \ref{app:inter_cifar100}, and \ref{app:inter_imagenet} present additional visualization results of the error prediction tracing process on the MNIST, CIFAR-100, and Mini-ImageNet datasets, respectively. The first row displays examples of misclassified test samples, the second row shows the most influential training data for classifying these samples, and the third row presents the most harmful training data for these classifications. These visualization results allow us to trace the outcomes of the error prediction process effectively.

\begin{figure}[ht]
    \centering
    \includegraphics[width=0.75\linewidth]{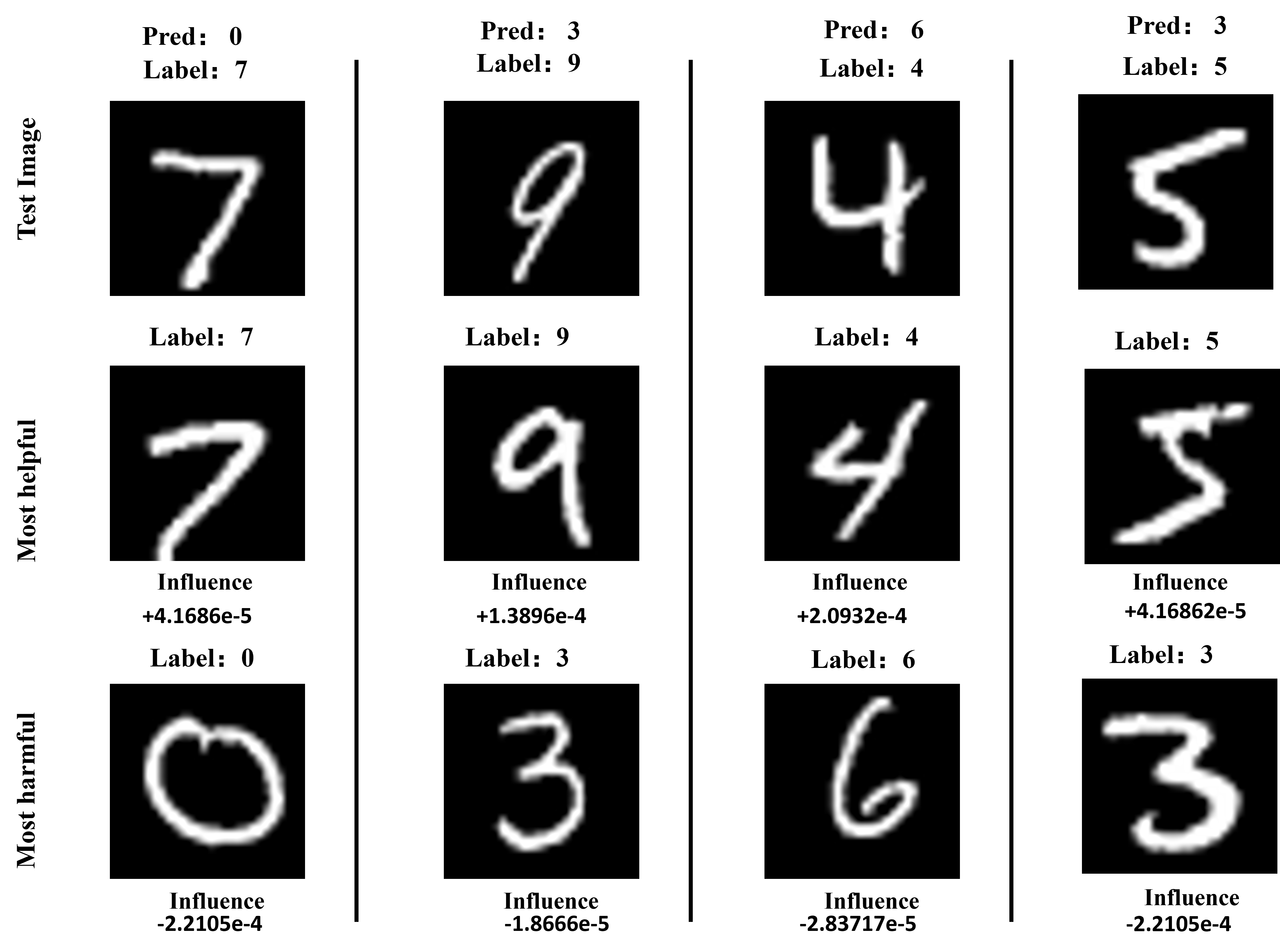}
    \caption{The most helpful and harmful training data tracked by misclassified data on MNIST dataset}
    \label{app:inter_minist}
\end{figure}

\begin{figure}[ht]
    \centering
    \includegraphics[width=0.75\linewidth]{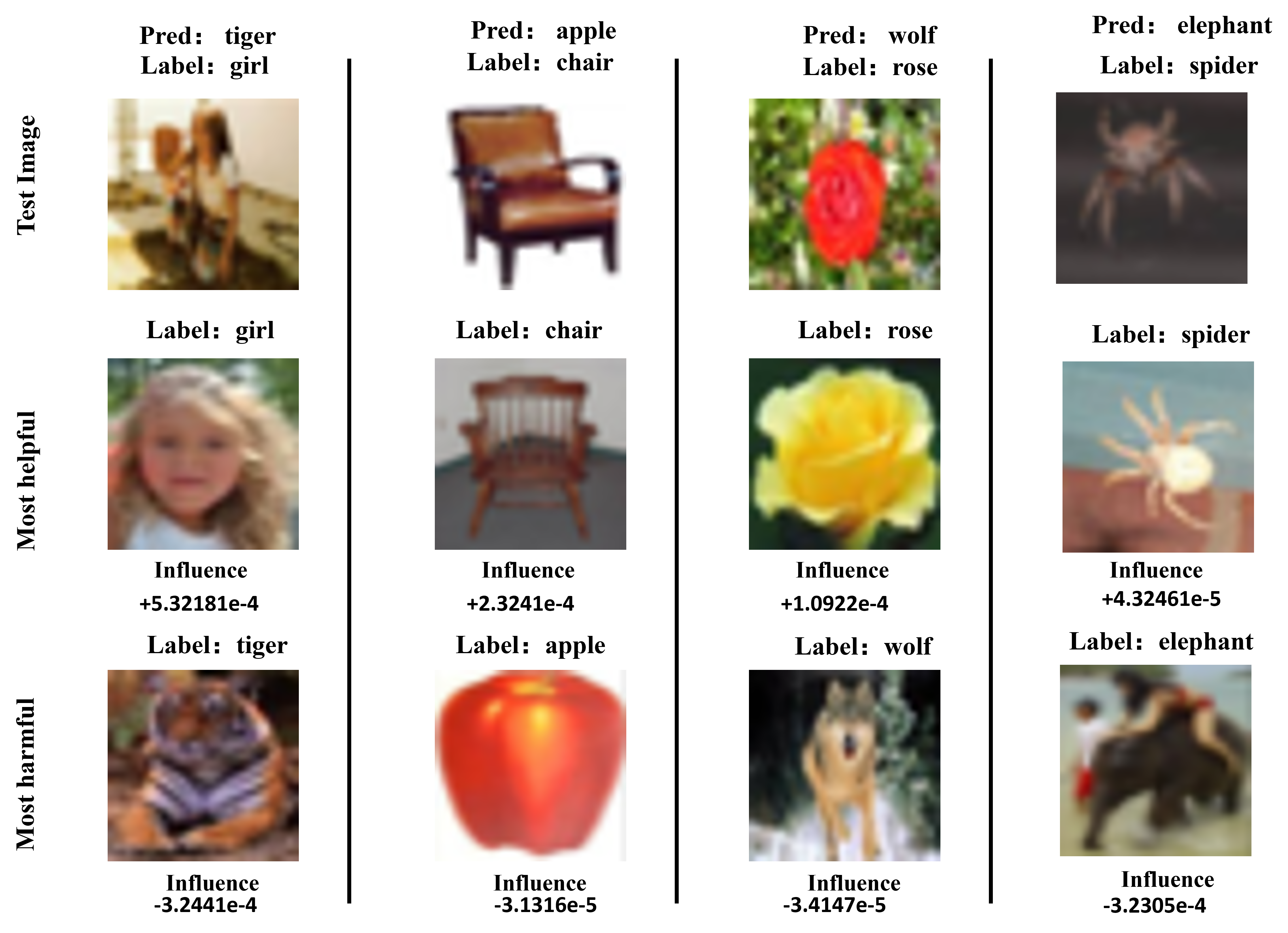}
    \caption{The most helpful and harmful training data tracked by misclassified data on CIFAR-100 dataset}
    \label{app:inter_cifar100}
\end{figure}

\begin{figure}[ht]
    \centering
    \includegraphics[width=0.75\linewidth]{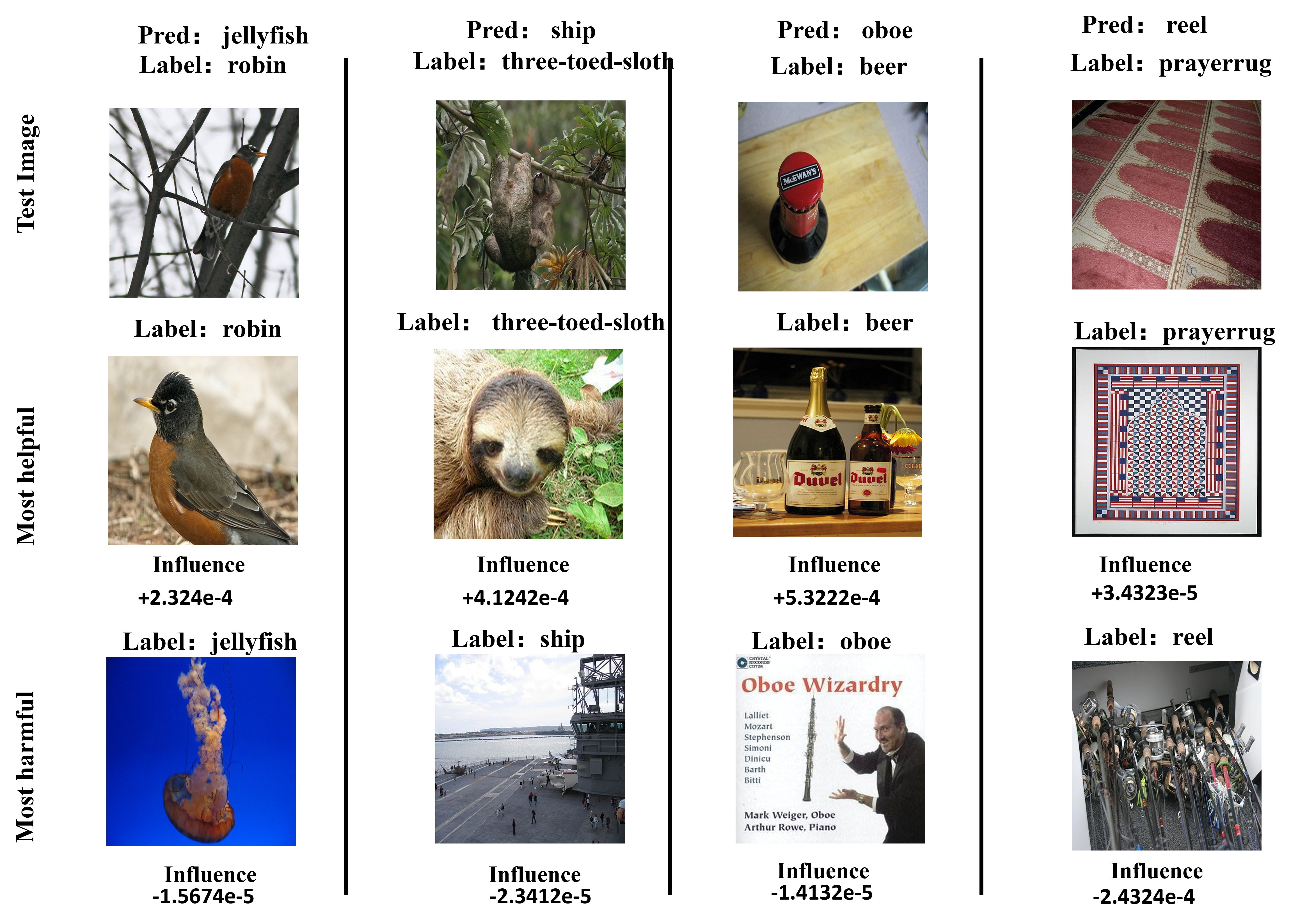}
    \caption{The most helpful and harmful training data tracked by misclassified data on Mini-ImageNet dataset}
    \label{app:inter_imagenet}
\end{figure}

\subsection{Additional Ablation Study}
Figure \ref{app:ablation} shows the ablation study results of SAM-GIF. We primarily tested the impact of different numbers of checkpoint weights on the SAM-GIF algorithm using the CIFAR-10 dataset. From the figure, we can observe that as the number of checkpoints increases, the accuracy of SAM-GIF becomes closer to that of retraining. When the number of checkpoints is 10, the accuracy of SAM-GIF is 0.9497, while the retraining accuracy is 0.9517. At the same time, as the number of checkpoints increases, the running time of SAM-GIF also increases.
\begin{figure}[ht]
    \centering
    \begin{subfigure}{0.35\linewidth}
        \centering
        \includegraphics[width=\linewidth]{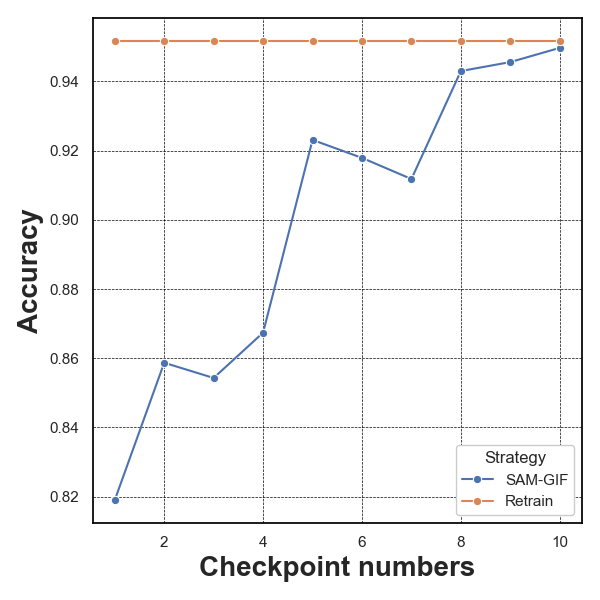}
        \caption{Accuracy of SAM-GIF}
        \label{fig:hdr4}
    \end{subfigure}%
    \begin{subfigure}{0.35\linewidth}
        \centering
        \includegraphics[width=\linewidth]{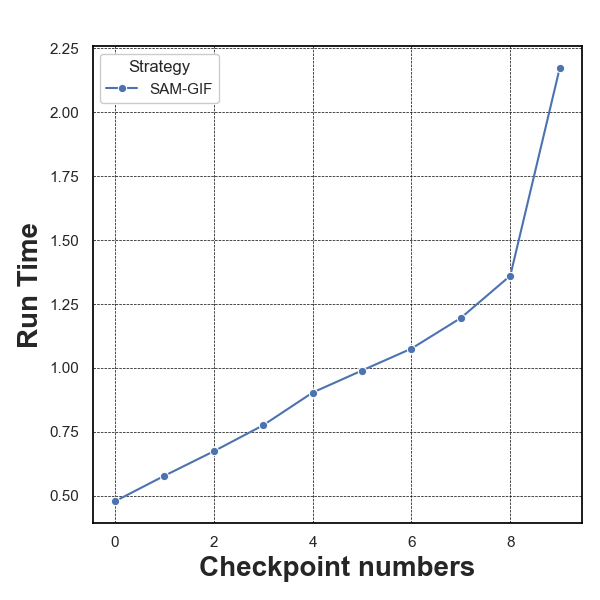}
        \caption{Runtime of SAM-GIF}
        \label{fig:accuracy4}
    \end{subfigure}
    \caption{Ablation study of SAM-GIF on Cifar10 dataset}
    \label{app:ablation}
\vspace{-12pt}
\end{figure}

\end{document}